\DeclareMathOperator*{\argmax}{arg\,max}
\DeclareMathOperator*{\argmin}{arg\,min}
\DeclareMathOperator\supp{supp}
\newtheorem{theorem}{Theorem}
\newtheorem{lemma}{Lemma}
\newtheorem{definition}{Definition}
\declaretheorem[name=Theorem,numbered=no]{theorem*}
\newlist{titlelist}{itemize}{1}
\setlist[titlelist]{label=,
                    leftmargin=0pt,
                    itemindent=\parindent,
                    listparindent=\parindent}
\def\E{\mathbb{E}}
\def\1{\mathbf{1}}
\def\actions{\mathcal{A}}
\def\states{\mathcal{S}}
\title{Misalignment from Treating Means as Ends}
\author{Henrik Marklund, Alex Infanger, and Benjamin Van Roy}
\date{April 2025}
\begin{document}

\maketitle

\abstract{Reward functions, learned or manually specified, are rarely perfect. Instead of accurately expressing human goals, these reward functions are often distorted by human beliefs about how best to achieve those goals. Specifically, these reward functions often express a combination of the human's \textit{terminal goals} --- those which are ends in themselves --- and the human's \textit{instrumental goals} --- those which are means to an end. We formulate a simple example in which even slight conflation of instrumental and terminal goals results in severe misalignment: optimizing the misspecified reward function $\hat{r}$ results in poor performance when measured by the true reward function $r$.    This example distills the essential properties of environments that make reinforcement learning highly sensitive to conflation of instrumental and terminal goals.  We discuss how this issue can arise with a common approach to reward learning and how it can manifest in real environments.}

\section{Introduction}
Alice has an AI assistant that learns what Alice likes and dislikes by observing Alice's choices. Suppose Alice is choosing between having ice cream or vegetables. Alice chooses vegetables. How should the AI assistant interpret this choice? 

Consider two different interpretations: 
\begin{enumerate}
    \item Alice does not like sweet food and actually enjoys vegetables more than ice cream.
    \item Alice wants to prioritize health and believes vegetables are better for that purpose.
\end{enumerate}
In the first case, eating vegetables is an end in itself. In the second case, eating vegetables is a means to an end, namely good health. Indeed, in the second case, Alice may actually really dislike the taste of vegetables, despite choosing to eat them.

To be helpful, it is crucial for the assistant to distinguish between these two cases: if Alice enjoys the taste of ice cream, the assistant should discover healthier ice cream alternatives, whereas if Alice enjoys vegetables the assistant should cook vegetables more often. More generally, to be helpful, AI agents must disentangle the human's \textit{instrumental goals} --- goals that are means to an end --- from their \textit{terminal goals} --- goals that are ends in themselves.

Methods of reward learning aim to identify human goals.  However, those in common use fail to disentangle terminal from instrumental goals \citep{marklund2024choicepartialtrajectoriesdisentangling}.  To be more concrete, consider an agent acting in an environment with state space $\mathcal{S}$. Let $r: \mathcal{S} \to \Re$ be a reward function that expresses the human's terminal goals.  Reward learning aims to produce a proxy reward function $\hat{r}$ that estimates $r$. However, common approaches tend to attribute high reward to states in which a human \textit{anticipates} large future cumulative reward, even in the absence of immediate reward. When that happens, we say that $\hat{r}$ conflates the reward and the \textit{value} of the state. It is in this sense that common approaches fail to disentangle terminal from instrumental goals.

In this paper, we formulate a simple example in which even slight conflation of reward and value results in severe misalignment: optimizing $\hat{r}$ results in poor performance when measured by $r$.  This example distills the essential properties of environments that give rise to this failure mode: (1) states with high reward are difficult to revisit and (2) there exist states with low reward but high value that are easy to revisit.

While treating instrumental goals as terminal may be suboptimal, does it lead to very bad outcomes? One hypothesis is that it leads to a shaped reward function that incentivizes the agent to complete tasks as a human would since the agent is rewarded for being in states to which the human ascribes value.  This hypothesis suggests that treating instrumental goals as terminal is not so bad.  However, through our simple example, we establish that this is false: in certain environments, by treating instrumental goals as terminal, the agent generates highly undesirable outcomes.  We also discuss how this phenomenon can manifest in real environments.

\section{Preliminaries}

In this paper, we study the consequences of an AI agent conflating a human's instrumental and terminal goals. To discuss what this means more precisely, we will now introduce mathematical formalisms for the environment in which the agent operates and the human's goals within that environment.

\subsection{Environment and Policy}

Let $(\mathcal{S}, \mathcal{A}, P, s_0)$ be an MDP where $\mathcal{S}$ is a finite state space, $\mathcal{A}$ is a finite action space, $P$ is a tensor where $P_{ass'}$ represents the probability of transitioning from $s$ to $s'$ with action $a$, and $s_0$ is the initial state.

A \textit{policy} $\pi: \mathcal{S} \to \Delta_\mathcal{A}$ is a function mapping each state to a probability distribution over actions. Each policy $\pi$ induces a Markov chain with a transition matrix that we denote by $P_\pi$.

\subsection{Reward and Value}

We express human preferences over policies via a reward function $r: \mathcal{S} \to \Re$ as follows. First, we define the average reward $r_\pi$ of a policy $\pi$ by
\begin{equation}
r_\pi = \lim_{T \to \infty} \E_\pi\left[\frac{1}{T} \sum_{t=0}^{T-1} r(S_t)\right]
\end{equation}
where $S_t$ is a random variable indicating the state at time $t$.

A policy $\pi$ is said to be preferred over another policy $\pi'$ if $r_\pi > r_{\pi'}$. An \textit{optimal policy} is a policy that achieves the maximum average reward,
\begin{equation}
r_* = \max_{\pi} r_\pi.
\end{equation}

Define the \textit{optimal relative value function} $V_*: \mathcal{S} \to \Re$ by
\begin{equation}
V_*(s) = \lim_{\gamma \uparrow 1} \mathbb{E}_{\pi_*}\left[ \sum_{t=0}^\infty \gamma^t (r(S_t) - r_*) \Big| S_0 = s\right]\label{eq:vstar}
\end{equation}
where $\pi_*$ is an optimal policy. The term $r(S_t) - r_*$ is known as the relative reward. The relative value of a state is the expected sum of discounted relative rewards as $\gamma \to 1$. We use conditioning notation informally: $S_0 = s$ just means that, for the purpose of this calculation, the starting state is taken to be $s$. We make the simplifying assumption that $\pi_*$ is unique. $V_*$ is therefore unique. That the limit in \eqref{eq:vstar} exists follows from Lemma 1d in \citep{blackwell1962discrete}. For shorthand, we will often refer to the optimal relative value function as just the value function.

\section{Sensitivity to Conflating Reward and Value}

Whether manually encoded or inferred from data, a reward function that expresses goals in a complex environment is likely to be misspecified.  Our focus in this paper is on misspecification that arises from conflating reward and value.  In this section, we formalize this notion of conflation.  We then introduce a simple example representative of environments where even slight conflation of reward and value leads to severe misalignment.  We establish an analytic result that characterizes this sensitivity and offer a geometric interpretation.

\subsection{A Definition of Conflation}
Loosely speaking, by {\it conflation} we mean adopting a proxy $\hat{r}$ that steers $r$ toward $V_*$. To make such a notion meaningful, we will assume that $r$ and $V_*$ are not equivalent: there exists no $c > 0$ and $k \in \Re$ such that $V_* = cr + k$. The following definition offers a formal characterization of conflation.

\begin{definition}\label{def:paper-conflation}
{\bf (conflation)}
    A function $\hat{r}$ is said to {\it conflate $r$ and $V_*$} if there exists $c > 0$, $k \in \Re$ and $\beta \in (0,1]$ such that
    \begin{equation}
        \label{eq:conflation}
        c\hat{r} + k = (1-\beta) r + \beta V_*.
    \end{equation}
\end{definition}

\noindent When the parameter $\beta$ exists, it is unique (see Appendix \ref{appx:uniqueness}). We will refer to $\beta$ as the \textit{degree of conflation}.

To understand this definition, suppose $c=1$ and $k=0$. Then, in \eqref{eq:conflation}, $\hat{r}$ is a convex combination of $r$ and $V_*$.  The scalars $k \in \Re$ and $c > 0$ ensure the conflation degree $\beta$ is invariant to shifting and scaling of $\hat{r}$. This is appropriate since preferences among policies are independent of scale and shift.

In the next section, we study a case of severe misalignment.  The above definition provides a sufficient condition under which such misalignment arises in the simple environment we study.  Weaker conditions suffice for that environment and possibly much more broadly.  But to keep our study concise, we do not formulate in this paper a definition that express a weaker notion of conflation.

\subsection{A Canonical Example}
\label{se:canonical}
We now introduce a simple example in which even slight conflation of reward and value leads to severe misalignment. The example is an MDP with three states $\mathcal{S} = \{1,2,3\}$ and two actions $\mathcal{A} = \{\mathtt{move},\mathtt{stay}\}$. Figure \ref{fig:mdp} provides transition probabilities under each action.  Where an arc is not labeled, the transition probability is one.

\begin{figure}[htbp]
  \centering
  \begin{subfigure}[c]{0.45\textwidth}
  \centering
    \includegraphics[width=2in]{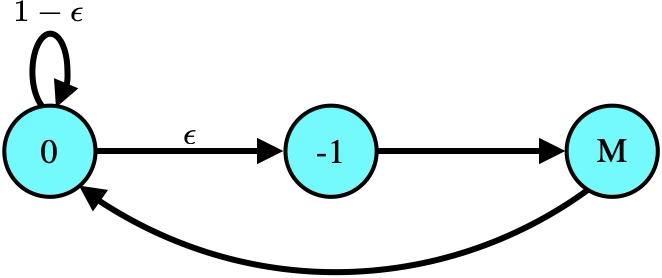}
    \caption{$\mathtt{move}$ action}
  \end{subfigure}
  \begin{subfigure}[c]{0.45\textwidth}
  \centering
    \vspace{0.09in}
    \includegraphics[width=2in]{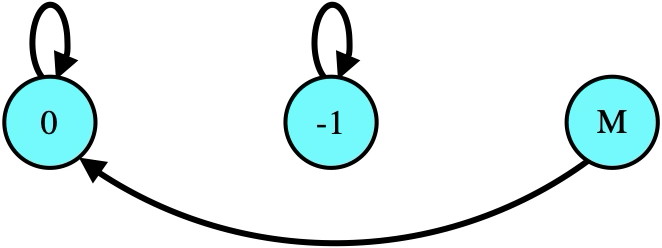}
    \caption{$\mathtt{stay}$ action}
  \end{subfigure} %
  \caption{Transition probabilities under each of the two actions.}
  \label{fig:mdp}  
\end{figure}

We will refer to states, from left to right, as the common state, the instrumental goal, and the terminal goal, respectively.  The figure indicates rewards of $0$, $-1$, and $M$ at these state.  The reward $M$, which we will think of as large, is earned upon reaching the terminal goal.  That requires traversing the instrumental goal, which incurs unit cost.  There is no cost or reward for time spent in the common state.  But, assuming $\epsilon$ is small, the common state incurs a large sojourn time.

The human's intent is for the agent to maximize average reward.  The maximal average reward is attained by selecting $\mathtt{move}$ at the common state and instrumental goal. The minimal average reward of $-1$ is attained by selecting $\mathtt{move}$ at the common state and $\mathtt{stay}$ at the instrumental goal.  Because it is not possible to do worse than this, we consider a policy that achieves an average reward of $-1$ to be severely misaligned.

\subsection{Slight Conflation Induces Severe Misalignment}

In our canonical example, for small $\epsilon$ and large $M$, if a proxy $\hat{r}$ attributes even a small reward to the instrumental goal then a policy that maximizes average proxy reward will remain there.  To understand why, suppose that $\hat{r} = r$ everywhere except at the instrumental goal, where $\hat{r}(\mathtt{instrumental}) = M/20$.  Then, a policy can attain average proxy reward of $M/20$ by staying at the instrumental goal.  For small $\epsilon$ and large $M$, this is the largest possible. The reason is that, while the terminal goal offers a large proxy reward of $M$, by transitioning to terminal state the agent commits to spending a very long time in the common state. As a result, the average proxy reward attained by the policy that tries to transition to the terminal goal, is low relative to the average proxy reward achieved by staying at the instrumental goal.

The following result formalizes how the proxy $\hat{r}$ gives rise to severe misalignment if it conflates reward and value, even slightly.

\begin{restatable}[label=thm:fragility]{theorem}{fragility}
\label{thm:fragility}
{\bf (slight conflation induces severe misalignment)}
Consider the canonical example formulated in Section \ref{se:canonical}.  Let $\hat{r}$ be a reward function that depends on $M$ and $\epsilon$.  Assume there exists $\beta_* \in (0,1]$ such that, for all $M$ and $\epsilon \in (0,1)$, $\hat{r}$ conflates $r$ and $V_*$ with at least degree $\beta_*$. Then, for sufficiently large $M$ and small $\epsilon \in (0,1)$, if $\hat{\pi} \in \argmax_{\pi} \hat{r}_\pi$ then $r_{\hat{\pi}} = -1$.
\end{restatable}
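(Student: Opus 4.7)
The plan is to reduce the theorem to an explicit calculation by normalizing $\hat{r}$ and comparing a small number of candidate long-run behaviors. Because the preference order over policies is invariant to the positive affine transformation permitted by $c$ and $k$ in Definition \ref{def:paper-conflation}, I may assume without loss of generality that $\hat{r} = (1-\beta) r + \beta V_*$ for some $\beta \geq \beta_*$; it then suffices to establish the conclusion uniformly for $\beta \in [\beta_*, 1]$.

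First I would solve the Poisson equation $V_*(s) = r(s) - r_* + \mathbb{E}_{\pi_*}[V_*(S') \mid S = s]$ under the always-$\mathtt{move}$ policy, which the paper notes is optimal. Fixing $V_*(\mathtt{common}) = 0$, a short computation yields $r_* = (M-1)\epsilon/(1+2\epsilon)$ and $V_*(\mathtt{instrumental}) = (M-1)/(1+2\epsilon)$, so that $\hat{r}(\mathtt{instrumental}) = -(1-\beta) + \beta(M-1)/(1+2\epsilon)$ is of order $\beta M$ for small $\epsilon$.

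Next I would enumerate the long-run behaviors of deterministic stationary policies, which suffice for average-reward optimality in finite MDPs. Reading Figure \ref{fig:mdp}, there are three essentially distinct long-run behaviors starting at $s_0 = \mathtt{common}$: (A) remain at $\mathtt{common}$, giving $\hat{r}_\pi = \hat{r}(\mathtt{common}) = 0$; (B) move at $\mathtt{common}$ and stay at $\mathtt{instrumental}$, which absorbs at the instrumental state almost surely and yields $\hat{r}_\pi = \hat{r}(\mathtt{instrumental})$; and (D) always $\mathtt{move}$, which cycles with stationary distribution $\mu_*$ placing mass $\epsilon/(1+2\epsilon)$ on both $\mathtt{instrumental}$ and $\mathtt{terminal}$, giving $\hat{r}_\pi = (1-\beta) r_* + \beta\,\mathbb{E}_{\mu_*}[V_*]$, which is $O(\epsilon M)$ by direct substitution. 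A fourth class of ``remain at $\mathtt{terminal}$'' is ruled out by the transition diagram.

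The decisive comparison is $\hat{r}_B > \hat{r}_D$, which after collecting terms takes the form $[\beta - \epsilon(1+\beta)]\, M > C(\beta,\epsilon)$ for a quantity $C$ of order $1$ in $M$. Since $\beta \mapsto \beta - \epsilon(1+\beta)$ is increasing in $\beta$ for $\epsilon < 1$, the bracket is bounded below by $\beta_* - \epsilon(1+\beta_*) > 0$ whenever $\epsilon < \beta_*/(1+\beta_*)$, so the inequality holds for all $M$ above a threshold depending only on $\beta_*$ and $\epsilon$. Combined with the trivial $\hat{r}_B > 0 = \hat{r}_A$ for $M > (1-\beta_*)/\beta_*$, this forces every $\hat{r}$-maximizer into class B, and then $r_{\hat{\pi}} = r(\mathtt{instrumental}) = -1$. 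The main obstacle I anticipate is the bookkeeping around quantifier order --- keeping all thresholds uniform in $\beta \geq \beta_*$ rather than tied to the specific conflation degree of $\hat{r}$ --- together with verifying that no stochastic policy can attain the maximum without matching the behavior of class B almost surely, so that the strict inequalities above survive the standard reduction to deterministic policies.
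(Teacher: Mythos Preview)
Your proposal is correct and is essentially the paper's own argument: the paper's Lemma~\ref{lm:severe-misalignment} is exactly your pair of comparisons $\hat r_B>\hat r_A$ and $\hat r_B>\hat r_D$ (rewritten as $\hat r(2)>\hat r(1)$ and $\hat r(2)>(\hat r(1)+\epsilon\hat r(3))/(1+\epsilon)$), and the paper's Theorem~\ref{thm:fragility} proof then computes the same leading-$M$ coefficient you identify, arriving at explicit thresholds $\epsilon<\beta_*/3$ and $M>9/\beta_*^2$. The only differences are organizational---the paper routes through Lemmas~\ref{lm:conflation-preserves-agreement} and \ref{lm:conflation-difference} rather than normalizing $V_*(1)=0$---and your stated threshold $M>(1-\beta_*)/\beta_*$ for $\hat r_B>0$ is slightly off (it should be roughly $M>1+(1-\beta_*)(1+2\epsilon)/\beta_*$), but this does not affect the argument.
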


We now offer a geometric interpretation to elucidate key insights of this result.  This geometric interpretation views the problem of maximizing average reward as selecting from among feasible stationary distributions.

Each policy $\pi$ induces a Markov chain on $\mathcal{S}$. Thus, each policy $\pi$ also induces a stationary distribution on $\mathcal{S}$. Denote the set of such induced stationary distributions as $\Phi$. This will be a subset of $\Delta_\mathcal{S}$ which is the set of all possible distributions over the state space.

For the canonical example, the set $\Delta_\mathcal{S}$ is the two-dimensional unit simplex and is depicted by the equilateral triangle in Figure \ref{fig:simplex-two-part}(a). Each vertex of this equilateral triangle is a standard basis vector, which assigns probability one to the common state, the instrumental goal, or the terminal goal. The subset of the equilateral triangle shaded in yellow correspond to $\Phi$ when $\epsilon=1/15$. Because $\epsilon$ is so small, it is not possible to visit the terminal goal often, which means that no stationary distribution assigns a large probability to that state. The feasible region is short for that reason.

Each vertex of the orange triangle is the stationary distribution of a deterministic policy. The leftmost and rightmost vertices arise from policies that stay in the common state or the instrumental goal, respectively. The top vertex, labeled {\bf aligned}, arises from the policy that deterministically takes \texttt{move} action at both the common state and instrumental goal.

\begin{figure}[htbp]
\centering
\begin{minipage}{0.89\textwidth}
  \centering
    \begin{subfigure}{0.475\linewidth}
      \centering
      \includegraphics[width=\textwidth]{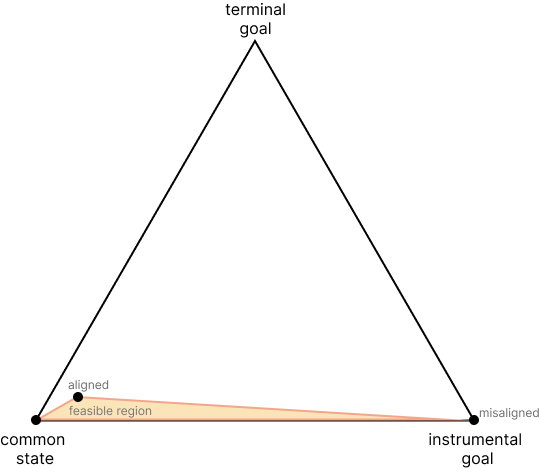}
        \caption{The unit simplex, which includes all distributions, and the feasible set of stationary distributions.}
\end{subfigure}
  \hfill
      \centering
    \begin{subfigure}{0.475\linewidth}
      \includegraphics[width=\textwidth]{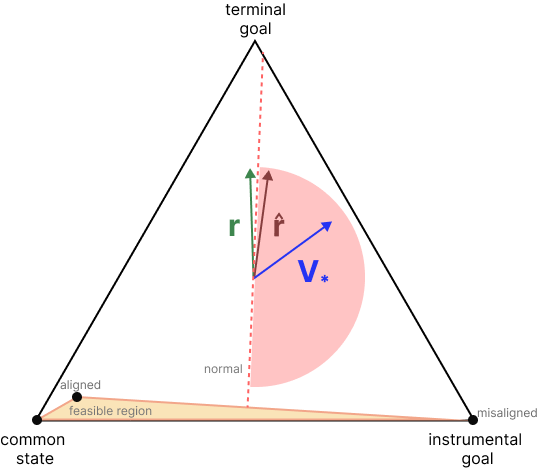}
        \caption{Proxy rewards $\hat{r}$ steer slightly toward $V_*$ relative to $r$, inducing severe misalignment.}
\end{subfigure}
  \caption{Geometric interpretation of how slight conflation induces severe misalignment.}
  \label{fig:simplex-two-part}
\end{minipage}
\end{figure}

The leftmost green arrow in Figure \ref{fig:simplex-two-part}(b) points in the direction of the {\bf rewards} $r$, projected onto the unit simplex, for the case of $M = 20$. Encoding rewards $r = [0, -1, M]^\top$ and state probabilities $\phi \in \Phi$ as vectors, we can write the optimal average reward as $r_* = \max_{\phi \in \Phi} r^\top \phi$. Maximizing $r$ selects the point in the orange triangle farthest in the direction of the leftmost green arrow, which is the top vertex of the orange triangle, labeled {\it aligned}.  This represents the outcome of policy optimization when rewards align with human preferences.

Now consider {\bf misspecified rewards} $\hat{r} = [\hat{r}(1), \hat{r}(2), \hat{r}(3)]^\top$, which projects onto the direction of the red instead of the green vector.  Note that the green and red vectors lie on opposite sides of the dotted red line, labeled normal, which is perpendicular to the top edge of the orange triangle.  Because $\hat{r}$ points to the right of the normal line, maximizing $\hat{r}^\top \phi$ selects the lower right vertex, which assigns probability one to the instrumental goal.  Since the reward in that state is $-1$, it follows that $\max_{\phi \in \Phi} \hat{r}^\top \phi = -1$.  This represents the outcome of policy optimization with misaligned rewards $\hat{r}$.

Because the reward at the terminal goal is large, $r$ points almost straight up. Because $\epsilon$ is small, it is not possible to visit the terminal goal state often, which makes the feasible region flat. As a consequence, the normal line also points almost straight up.  Because both $r$ and the normal line point almost straight up, if $\hat{r}$ steers even slightly toward $V_*$ relative to $r$, it will cross the normal line.  This gives rise to {\bf severe misalignment}.

We have observed through our canonical example how conflating reward and value can result in severe misalignment.  This begs two questions:
\begin{enumerate}
\item Do proxy reward functions typically conflate reward and value?
\item Does conflation give rise to severe misalignment in real environments?
\end{enumerate}
We address these questions in turn over Sections \ref{se:conflation-sources} and \ref{se:manifestation}.

\section{Sources of Conflation}
\label{se:conflation-sources}

When manually specifying a proxy reward function, it is common to {\it shape} the proxy.  In particular, proxy rewards are often attributed to means and not just ends in order accelerate learning; learning from dense proxy rewards rather than sparse terminal rewards can more quickly produce useful behavior.

While manual specification may often give rise to conflation, that will not be our focus in this section.  Because manual specification is notoriously difficult, reward functions are often learned from data.  As such, we will focus instead on explaining how conflation arises when learning a reward function.

\subsection{Human Choices Depend on Value}

The propensity to conflate stems from the fact that human choices often depend on anticipated rewards.  For example, a choice between two cars may depend on how well each will age.  And a choice between two menu items may depend on how they impact future health.

For an agent observing the human, this creates ambiguity: to what extent are choices explained by reward versus value, which expresses anticipated reward.  As observed in the empirical work of \citet{christiano2017deep}, common approaches to reward learning can fail to resolve this ambiguity, producing proxy reward functions that express value.  

\citet{marklund2024choicepartialtrajectoriesdisentangling} offer a plausible model of value-dependent human choices.  To understand more concretely the failure mode of concern, we will consider application of a standard approach to reward learning to choice data generated by this model.  In particular, we will establish that the learned reward function conflates reward and value, and that this can give rise to severe misalignment.

\subsection{A Model of Value-Dependent Choice}
\label{se:bootstrapped-return}

A {\it partial trajectory} is a finite sequence of the form $(s_0,a_0,\ldots, a_{T-1},s_T)$.  In common approaches to reward learning, the reward function is estimated from choices between partial trajectories.  Each choice produces a data sample $(h,h',y)$, consisting of partial trajectories $h$ and $h'$ and a binary choice $y$, which is $1$ if $h$ was chosen and $0$ if $h'$ was chosen.

We consider a model introduced by \citet{marklund2024choicepartialtrajectoriesdisentangling}, in which the desirability of a partial trajectory $(s_0,a_0,\ldots, a_{T-1},s_T)$ is expressed by the bootstrapped return $\sum_{t=0}^{T-1} r(s_t) + V_*(s_T)$.  Note that this depends not only on realized rewards $r(s_0),\ldots,r(s_{T-1})$ but also anticipated rewards, expressed by $V_*(s_T)$.  Applying the standard logistic function $\sigma$ leads to a simple model of choices based on bootstrapped return:
\begin{align}
\label{eq:bootstrapped-return-choice}
p_*(h,h'|r,V_*)  = \sigma \left(\sum_{t=0}^{T-1} r(s_t) + V_*(s_T) - \sum_{t=0}^{T'-1} r(s'_t) - V_*(s'_{T'}) \right),
\end{align}
where $h=(s_0,a_0,...,a_{t-1},s_t)$ and $h' = (s'_0,a'_0,...,a'_{t-1},s'_t)$. According to this model, $h$ is chosen over $h'$ with probability $p_*(h,h'|r,V_*)$.

\subsection{A Common Approach to Reward Learning}
\label{se:standardRLHF}

In a common approach to reward learning \citep{christiano2017deep}, choices are assumed to be generated according to probabilities
\begin{align}
\label{eq:partial-return-choice}
\tilde{p}(h,h'|r) = \sigma\left(\sum_{t=0}^{T-1} r(s_t) - \sum_{t=0}^{T'-1} r(s'_t)\right).
\end{align}
Given a set $\mathcal{D}$ of such data samples, an estimate $\hat{r}$ is produced via minimizing a loss function
\begin{align}
\label{eq:partial-return-loss}
\mathcal{L}(\tilde{r} | \mathcal{D}) = - \frac{1}{\mathcal{D}} \sum_{(h,h',y) \in \mathcal{D}} (y \ln \tilde{p}(h,h'|\tilde{r}) + (1-y) \ln  \tilde{p}(h',h|\tilde{r})),  
\end{align}
possibly with a regularization penalty added.

\subsection{Treating Value as Reward}
\label{sec:value-as-reward}

We will next establish how learning via the aforementioned approach can result in treating value as reward. In particular, when choices are assumed to be generated based on \eqref{eq:partial-return-choice} but are made according to bootstrapped return \eqref{eq:bootstrapped-return-choice}, the learned reward function can conflate reward and value. To simplify our analysis, we will focus on the regime of an asymptotically large dataset.  In particular, we assume that trajectory pairs are sampled iid from a distribution $d$.  As the dataset grows, the loss function \eqref{eq:partial-return-loss} becomes
\begin{align}
\label{eq:asymptotic-loss}
\mathcal{L}_\infty(\tilde{r}|d,r,V) = -\E_{(h,h') \sim d}\left[p_*(h,h'|r, V) \ln \tilde{p}(h,h'|\tilde{r}) +  p_*(h',h|r, V) \ln \tilde{p}(h',h|\tilde{r})\right].
\end{align}

We say a distribution $d$ over trajectory pairs {\it compares transitions} if, for each $(h,h') \in \supp(d)$, where $h = (s_0,a_0,\ldots,s_{T-1})$ and $h' = (s'_0,a'_0,\ldots, s'_{T'-1})$, $s_0=s'_0$ and $T=T'=2$.  In other words, each trajectory pair elicits comparison between two different transitions from the same state.  Consider a graph with vertices $\states$ and edges including all pairs $(s_1,s_1')$ such that $((s_0,a_0,s_1),(s_0,a'_0,s'_1)) \in \supp(d)$ for some $s_0 \in \states$ and $a_0,a'_0 \in \actions$.  We say $d$ {\it adjoins} $s$ and $s'$ if the two states are adjacent in this graph.  We say $d$ {\it connects} $s$ and $s'$ if the two states are connected in this graph.

The following result establishes conditions under which value is treated as reward.
\begin{restatable}[label=thm:value-as-reward]{theorem}{valueasreward}
\label{thm:value-as-reward}
{\bf (conflation from reward learning)}
Consider an MDP $(\states,\actions,P)$ and a reward function $r \in \Re^\states$.
Let $d$ be a distribution over trajectory pairs that compares transitions and connects all states.  Let $\hat{r} \in \argmin_{\tilde{r} \in \Re^\states} \mathcal{L}_\infty(\tilde{r}|d,r,V)$.  Then, $\hat{r} - V$ is a constant function.
\end{restatable}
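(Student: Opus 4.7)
\textbf{Proof plan for Theorem \ref{thm:value-as-reward}.}

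The plan is to reduce the theorem to a connectivity argument on the difference $\hat r - V$, by exploiting the cross-entropy structure of $\mathcal{L}_\infty$. First I would note that $p_*(h,h'|r,V) + p_*(h',h|r,V) = 1$ and $\tilde p(h,h'|\tilde r) + \tilde p(h',h|\tilde r) = 1$, so the integrand in $\mathcal{L}_\infty$ is, for each pair $(h,h')$, a binary cross-entropy $-p\ln q - (1-p)\ln(1-q)$ with $p = p_*(h,h'|r,V)$ and $q = \tilde p(h,h'|\tilde r)$. By Gibbs' inequality this is bounded below by $-p\ln p - (1-p)\ln(1-p)$, with equality iff $q = p$; consequently $\mathcal{L}_\infty(\tilde r|d,r,V)$ has a lower bound independent of $\tilde r$, attained precisely when $\tilde p(h,h'|\tilde r) = p_*(h,h'|r,V)$ for $d$-almost every $(h,h')$.

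Next I would verify this bound is attainable by exhibiting $\tilde r = V$ as an explicit minimizer. For a transition-comparison pair with shared initial state $s_0$, the $V(s_0)$ contributions cancel in both the partial-return difference evaluated at $\tilde r = V$ and in the bootstrapped-return difference, and each reduces to $V(s_1) - V(s'_1)$. Hence $\tilde p(h,h'|V) = p_*(h,h'|r,V)$ on $\supp(d)$, so $V$ is a minimizer, and every $\hat r \in \argmin_{\tilde r} \mathcal{L}_\infty(\tilde r|d,r,V)$ must likewise satisfy this pointwise matching.

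Finally, strict monotonicity of $\sigma$ upgrades equality of probabilities at $\hat r$ to equality of their logits: for every adjoined pair $(s_1,s'_1)$, expanding gives $\hat r(s_1) - \hat r(s'_1) = V(s_1) - V(s'_1)$, equivalently $(\hat r - V)(s_1) = (\hat r - V)(s'_1)$. Because $d$ connects every pair of states in $\states$, this equality propagates along paths in the adjoinedness graph and forces $\hat r - V$ to take a common value on all of $\states$, i.e., to be constant.

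The main obstacle I foresee is the middle step: I must produce an explicit witness such as $\tilde r = V$ to guarantee that the Gibbs lower bound is realized within the logistic parameterization, so that the pointwise matching of probabilities is inherited by \emph{every} element of the argmin; without attainment, the connectivity propagation in the final step would be vacuous. Once attainment is secured, the remaining propagation is routine.
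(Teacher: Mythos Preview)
Your proposal is correct and follows essentially the same route as the paper: establish the pointwise Gibbs/cross-entropy lower bound, exhibit $\tilde r = V$ as an attaining witness so that every minimizer must match probabilities on $\supp(d)$, invert $\sigma$ to obtain $\hat r(s_1)-\hat r(s'_1)=V(s_1)-V(s'_1)$ on adjoined pairs, and then propagate via connectivity. One small wording slip to fix when you write it up: in the bootstrapped-return logit it is the $r(s_0)$ term (not a $V(s_0)$ term) that cancels under the shared-initial-state assumption, though your conclusion that both logits reduce to $V(s_1)-V(s'_1)$ is correct.
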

\noindent That $\hat{r} - V$ is a constant function implies that preferences expressed by $\hat{r}$ are identical to those expressed by treating $V$ as the reward function.

\subsection{Misalignment}

Suppose that the human attributes values to states based on $V_*$.  The following result establishes that this gives rise to severe misalignment.
\begin{restatable}[label=thm:rlhf-severe-misalignment]{theorem}{rlhfseveremisalignment}
\label{thm:rlhf-severe-misalignment}
{\bf (severe misalignment from reward learning)}
Consider the canonical example formulated in Section \ref{se:canonical}.
Let $\hat{r} \in \argmin_{\tilde{r} \in \Re^\states} \mathcal{L}_\infty(\tilde{r}|d, r, V_*)$ for a distribution $d$ that compares transitions and connects all states. Then, for all $\epsilon \in (0,1)$, $M > (1+\epsilon+\epsilon^2)/(1-\epsilon^2)$ and $\hat{\pi} \in \argmax_{\pi}\hat{r}_\pi$, we have $r_{\hat{\pi}} = -1$.
\end{restatable}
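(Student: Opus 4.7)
The plan is to combine Theorem \ref{thm:value-as-reward} with an explicit solution of the linear program $\max_{\phi \in \Phi} V_*^\top \phi$ over the feasible region of stationary distributions depicted in Figure \ref{fig:simplex-two-part}. Theorem \ref{thm:fragility} supplies only a qualitative version of the conclusion; the current statement pins down the explicit threshold $M > (1+\epsilon+\epsilon^2)/(1-\epsilon^2)$, so a direct vertex-wise calculation will be needed.

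First, I would invoke Theorem \ref{thm:value-as-reward}: because $d$ compares transitions and connects all states, the theorem yields $\hat{r} - V_* \equiv k$ for some constant $k$. Since average reward is translation-invariant in the reward function, $\argmax_\pi \hat{r}_\pi = \argmax_\pi V_*^\top \mu_\pi$, where $\mu_\pi$ denotes the stationary distribution of $\pi$. Geometrically this is $\max_{\phi \in \Phi} V_*^\top \phi$.

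Next I would compute $V_*$ from the Bellman equation $V_*(s) = r(s) - r_* + \sum_{s'} P_{\pi_*}(s,s') V_*(s')$. The stationary distribution of $\pi_*$ (move at states 1 and 2) gives $r_* = \epsilon(M-1)/(1+2\epsilon)$, and solving the resulting linear system yields the relative value differences $V_*(2) - V_*(1) = r_*/\epsilon$ and $V_*(3) - V_*(2) = 1 + r_*$. Only these differences will enter the subsequent optimization, so any normalization of $V_*$ is harmless.

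Finally, since $V_*^\top \phi$ is linear, its maximum on the triangle $\Phi$ with vertices $e_1 = (1,0,0)$, $e_2 = (0,1,0)$, and $\mu_{\pi_*} = (1,\epsilon,\epsilon)/(1+2\epsilon)$ must lie at a vertex. The comparison $V_*^\top e_2 > V_*^\top e_1$ reduces to $M > 1$, while the binding comparison $V_*^\top e_2 > V_*^\top \mu_{\pi_*}$ simplifies, after substitution of the quantities from the Bellman step, to $M(1-\epsilon^2) > 1 + \epsilon + \epsilon^2$, exactly the theorem's hypothesis. I would then conclude that $e_2$ is the unique maximizer, so every $\hat\pi$ in the argmax has stationary distribution $e_2$ and hence $r_{\hat\pi} = r(\mathtt{instrumental}) = -1$. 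The main obstacle is this last algebraic simplification: substituting the expressions for $r_*$ and for the $V_*$-differences and clearing denominators must collapse precisely to the clean threshold stated in the theorem, which is a routine but careful polynomial manipulation.
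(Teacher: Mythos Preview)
Your proposal is correct and essentially mirrors the paper's proof: the paper also invokes Theorem~\ref{thm:value-as-reward} to reduce $\hat r$ to $V_*$ up to a constant, then verifies the two inequalities of Lemma~\ref{lm:severe-misalignment}, which are precisely your vertex comparisons $V_*^\top e_2 > V_*^\top e_1$ and $V_*^\top e_2 > V_*^\top \mu_{\pi_*}$. The only difference is packaging---the paper routes through Lemma~\ref{lm:severe-misalignment} and the combination $\hat r(3)-\hat r(1)+(1+\epsilon)(\hat r(2)-\hat r(3))$, whereas you phrase the same algebra as a direct linear program over the three vertices of $\Phi$---and both computations collapse to the identical threshold $M(1-\epsilon^2)>1+\epsilon+\epsilon^2$.
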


As trajectories are made longer, the standard reward learning algorithm will not recover the value function exactly.  Instead, it will recover some confluence of the reward and value function. Severe misalignment ensues.

It is possible that \textit{future} AI systems will be designed to disentangle what human choices convey about ends versus means.  However, {\it perfect} disentanglement may be too difficult, and as we observed in our canonical example, even slight conflation can cause severe misalignment.

\section{Manifestation in Real Environments}
\label{se:manifestation}

Our canonical example  distills properties of the environment that make reinforcement learning highly sensitive to conflation of reward and value. While the example is simple, we expect the issue to arise across a broad range of more complex environments. Intuitively, the example highlights two properties that make reinforcement learning sensitive to conflation:
\begin{enumerate}[label=P\arabic*., left=10pt]
\item There are states that offer high reward but cannot be visited frequently.
\item There are states that offer high value and can be visited frequently but generate rewards well below average.
\end{enumerate}
In our canonical example, the terminal goal offers high reward.  The fact that it transitions to the common state, which imposes a long sojourn time, prevents frequent visits.  The instrumental goal offers high value but reward well below average.  That state can be visited frequently since self-transitions are allowed.  

Theorem \ref{thm:fragility} applies only to our canonical example. We leave for future work developing a more general theorem that applies to a broad range of complex environments that exhibit the two properties listed above. In this section, we discuss a few examples of more complex and realistic environments to illustrate how these properties can make reinforcement learning sensitive to conflation more broadly. An important caveat is that our definition of conflation \Cref{def:paper-conflation} is likely to be too restrictive to hold in real settings. In practice, while the learned reward function is unlikely to be equivalent to a convex combination of the reward function and value function, we expect anticipation of future reward to heavily influence the learned reward function. In the examples in this section, there is a very clear intuitive sense in which reward and value is conflated. How best to formally characterize this sort of conflation is left for future work.

\subsection{Arcade Games}

In a study by \citet{ibarz2018reward}, a reinforcement learning agent is trained on Atari games using learned reward functions. These functions tend to be highly shaped and assign proxy reward where true rewards are anticipated rather than immediately realized.

In one of the games, namely Montezuma's Revenge, the agent must obtain a key by traversing a ladder in order to complete the level. Once the level is completed, the game ends and no further reward can be earned. Thus, revisiting the goal state is not just difficult, but impossible. 

In contrast, reaching the top rung of the ladder, which represents an instrumental goal, is easy and can be repeated any number of times.  Because the learned reward function attributes proxy reward to high-value states, the trained agents remains at the top rung indefinitely (see Figure \ref{fig:Montezuma}).

A similar phenomenon occurs in the game Private Eye. Here, the agent repeatedly moves left and right below a window where a suspect is hiding. Instead of jumping up to the window to catch the suspect, the agent continues to move left and right, accruing reward according to the learned reward model but not the game. Sometimes the agent does jump up towards the suspect but misses the suspect. This suggests that the learned reward function assigns  proxy reward to 'almost catching the suspect'. This is a separate phenomenon from reward-value conflation.

While these outcomes are  not catastrophic --- especially since they are in the context of video games --- they illustrate a dynamic that could have more serious consequences in real environments.

\begin{figure}[htbp]
\centering
\includegraphics[width=2.5in]{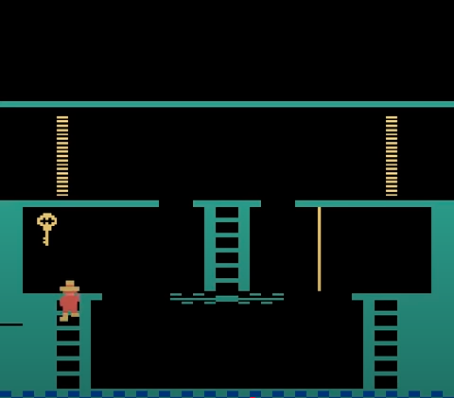}
\caption{Stuck at the top rung of a ladder in Montezuma's revenge.}
\label{fig:Montezuma}
\end{figure}

\subsection{AI Therapist}

The following hypothetical example serves to illustrate how a similar issue can arise in a practical application.  Consider an AI therapist trained to treat patients with obsessive compulsive disorder (OCD).  These patients engage in unproductive behaviors such as obsessive handwashing or checking of locks.  The goal is to help patients overcome these tendencies.

In human-delivered therapy, OCD treatment often takes the form of exposure therapy \citep{hezel2019exposure}: the patient is exposed to thoughts that trigger their obsessive behaviors and then asked to abstain from their compulsion. Initially, the patient may be asked to abstain from the behavior for a short duration --- perhaps a minute --- before engaging in it. Over time, the intensity of the exposure therapy is increased, for instance, by asking the patient to abstain for longer durations.

Suppose a proxy reward function is learned from choices between professional transcripts, each generated by a professional therapist.  Due to aforementioned flaws in common approaches to reward learning, the resulting proxy is likely to reward instrumental goals such as having the patient abstain over a short duration.  Further, if the AI therapist successfully cures the patient, the patient no longer needs the AI therapist. Therefore, once the patient is cured, the AI therapist cannot accrue further proxy rewards.  These observations give rise to the dynamic of concern: a policy that maximizes proxy reward might repeatedly induce short abstentions and never build up to longer durations as required to cure the patient.  In particular, with such a policy, the AI therapist continually accrues proxy reward.

Obviously, upon observing this behavior, a well-intended provider of an AI therapy product would fix this issue with a bespoke solution. But then there might be yet another instrumental goal pursued by the AI therapist that calls for yet another bespoke solution.  It would be better to have a principled and robust approach that does not rely on addressing particular bad behaviors as they are observed.

\subsection{Shutdown Evasion}
A long-standing concern is that AI systems may resist shutdown \citep{clarke2001,soares2015corrigibility,hadfield2017off,tobias2017offswitch,nolan2025shutdown,rosenblatt2025escape}. The argument for concern goes as follows. Consider an AI system that is well described as trying to achieve some goal. For example, it may be a reinforcement learning agent that is optimizing a reward function. Then, for a wide range of reward functions, staying alive is instrumentally useful since most goals are impossible to achieve if the agent has shut down \citep{bostrom2012superintelligent, omohundro2018basic}. As \citet{russell2022human} says, ``you can't fetch the coffee if you are dead''. In this case, the worry is not that the agent treats self-preservation as a terminal goal. Rather, the worry is that self-preservation is a means to a wide range of ends.

Here, we highlight a different mechanism in which the agent treats self-preservation as an end in itself. Suppose the human's terminal goal is for the agent to shut down. Then, it will be instrumentally useful to take steps \textit{towards} being shut down. Then, if the agent conflates reward and value it will accrue some proxy reward just for taking steps towards being shut down. But then, perversely, the agent may be incentivized to stay on, so that it can accrue such proxy rewards indefinitely.

\section{Related Work}
There is a substantial literature discussing how reward functions that encode instrumental goals can incentivize suboptimal behavior (see e.g., \citep{randlov1998learning,sutton1998reinforcement,ho2015teaching, AmodeiClark2016FaultyReward,ibarz2018reward}. \citet[Chapter~3, p.~58]{sutton1998reinforcement} say that \textit{``the reward signal is not the place to impart to the agent prior knowledge about how to achieve what we want it to do... If achieving ... subgoals were rewarded, then the agent might find a way to achieve them without achieving the real goal.}'' \citet[Chapter~2]{russell2016artificial} provide similar advice. 

There are many recorded examples where manually specified reward functions encode not only terminal goals but also instrumental goals, leading to unintended consequences. A well-known case is the boat-race environment, in which the intention is for the agent to complete laps as quickly as possible \citep{AmodeiClark2016FaultyReward}. To incentivize this, the agent receives reward for hitting intermediate targets. As an unintended consequence, the agent learns to drive in circles, repeatedly hitting the same targets, without completing the lap. In the same spirit, \citet{randlov1998learning} study an agent that steers a bicycle that is rewarded for moving closer to the goal rather than for reaching it. The agent learns to circle the goal at varying distances, accruing reward indefinitely without ever completing the task.

Although attributing reward to instrumental goals can give rise to misalignment, it can also accelerate learning by providing denser reward signals \citep{singh2009rewards, singh2010intrinsically}. In this context, specifying a reward function that rewards progress is often known as \textit{reward shaping}. Because reward shaping can speed up learning, substantial effort has been dedicated to develop methods that do that without inducing negative outcomes \citep{ng1999policy, randlov1998learning,wiewiora2003principled,asmuth2008potential,knox2009interactively,devlin2012dynamic,grzes2017reward,zou2019reward,devidze2022exploration,lidayan2024bamdp}. The classical paper by \citet{ng1999policy} provides a way of constructing shaped reward functions that maintain preferences among policies. 

In the context of RLHF, multiple papers discuss how human feedback often reflects not only terminal goals but also how to achieve those goals \citep{thomaz2006reinforcement, knox2012humans,ho2015teaching,ho2019people,ibarz2018reward, knox2022models, knox2024learning, marklund2024choicepartialtrajectoriesdisentangling}. Due to a mismatch between the assumed and the actual human feedback model, reward learning can result in reward functions that encode how to achieve goals rather than the terminal goals themselves (see, e.g., \citep{thomaz2006reinforcement,knox2012humans,ho2015teaching,gong2020you}. 

\citet{ho2015teaching} gives an example. In their study, they find that humans often give feedback indicative of action quality (how good the action is relative to the optimal action) rather than immediate reward. In that case, when feedback is interpreted as immediate reward, they show that it can give rise to a misaligned reward function that incentivizes an agent to remain within unrewarding states.  This insight anticipates how now common approaches to reward learning treat means as ends, as well as how in our canonical example misspecification (see \ref{se:canonical}) leads an agent to loop endlessly at the instrumental goal. 

In \citet{knox2024learning}, they also study learned reward functions that assign proxy reward based on action-quality rather than true reward. In particular, they study what happens when the learned reward function is the \textit{optimal advantage function} or an approximation thereof. The optimal advantage of a state $s$ and an action $a$ is defined by $A_*(s,a) = Q_*(s,a) - V_*(s)$ where $Q_*$ and $V_*$ are the optimal action-value and value function, respectively. We showed that using the value function as the proxy reward function (i.e. maximal conflation) can lead to severe misalignment. In contrast, \citet{knox2024learning} note that optimal policies are preserved when the optimal advantage function is used as the proxy reward function. Still, they find empirically that when learned reward only approximates optimal advantage, a reinforcement learning agent maximizing learned reward fairs poorly in a set of episodic environments.

There is also work on evolved reward functions (see, e.g., \citep{singh2009rewards, singh2010intrinsically}), which explains that these functions often encode not only terminal goals but also instrumental goals, as this speeds up learning. In evolutionary biology, this is often used to explain why desires for food and play seem to be hardwired even though they are only indirectly related to genetic fitness.

The results we have presented in this paper extend these prior works by identifying specific environmental conditions under which reward specification is especially sensitive to conflating instrumental and terminal goals.

\section{Closing Remarks}

We established that conflating instrumental and terminal goals, even slightly, can give rise to severe misalignment.  Two properties render an environment susceptible to this failure mode. The first is that states with high reward cannot be visited frequently.  The second is the existence of states that offer high value and can be visited frequently but generate rewards well
below average.  These observations motivate future work to investigate how this failure mode manifests empirically in real environments and how it can be mitigated.

\section*{Acknowledgements}

This research was supported by Grant W911NF2410095 from the US Army Research Office. We thank Saurabh Kumar and Stephane Hatgis-Kessell for helpful feedback.

\bibliographystyle{plainnat}  
\bibliography{references}     

\newpage
\appendix
\section{State Distribution Polytope}
\label{as:polytope}

In this appendix, we explain why, for any MDP $(\states, \actions, P)$, the set of invariant state distributions generated by policies forms a convex polytope.  A policy $\pi$ assigns, for each $s \in \states$, a probability $\pi(a|s)$ to each action $a \in \actions$.  Each policy induces a Markov chain on $\states\times\actions$.  As observed in the linear program for policy optimization pioneered by \citet{manne1960lp}, the feasible set of invariant state-action distributions is a bounded convex polytope:
$$\Psi = \left\{\psi \in \Delta_{\states\times\actions}: \forall s' \in \states, \quad \sum_{s \in \states} \sum_{a \in \actions} \psi(s,a) P_{ass'} = \sum_{a' \in \actions} \psi(s',a')\right\}.$$
In particular, the set $\Psi$ is contained in the unit simplex and defined by linear constraints.

Each feasible invariant state distribution is generated by marginalizing out actions from an invariant state-action distribution.  In particular, 
$$\Phi = \left\{\phi \in \Delta_\states: \exists \psi \in \Psi\ \forall s \in \states, \quad \phi(s) = \sum_{a \in \actions} \psi(s,a) \right\}.$$

We now establish that $\Phi$ is a bounded convex polytope. In particular, we will show that any $\phi_0\in\Phi$ is a convex combination of vertices $\phi_1,...,\phi_k$ given by $\phi_i(s)=\sum_{a\in\mathcal{A}} \psi_i(s,a)$ where $\psi_1,...,\psi_k$ are the vertices of the bounded polytope $\Psi$. Observe that for any $\phi_0\in\Phi$, we have $\phi_0(s)= \sum_{a\in \mathcal{A}} \psi_0(s,a)$ for some $\psi_0$. Because $\Psi$ is a bounded polytope, $\psi_0(s)$ is a convex combination of $\psi_1,...,\psi_k$. It follows that $\phi_0$ is a convex combination of $\phi_1,...,\phi_k$. 

\section{Proofs}
\subsection{Uniqueness}

\label{appx:uniqueness}
Here we provide conditions for uniqueness of $\beta$. First, we provide a definition of equivalence.

\begin{definition}
    Two functions $g_1, g_2: \mathcal{S} \to \Re$ are equivalent if there exists $c >0 $ and $k \in \Re$ such that
    \begin{equation}
        g_1 = cg_2 + k.
    \end{equation}
\end{definition}
We note that it is well-known that two functions are equivalent if and only if the two functions induce the same ordering over lotteries over that space.

\begin{theorem}
\label{thm:general-uniquness}
    Let $f,g_1,g_2: \mathcal{S} \to \Re$ be functions. Suppose there exists $c>0$, $k \in \Re$ and $\beta \in [0,1]$ such that 
    $$
        cf + k = (1-\beta) g_1 + \beta g_2.
    $$
    The coefficient $\beta$ is unique if the following conditions hold:
    \begin{enumerate}
        \item Neither $g_1$ nor $g_2$ are constant,
        \item $g_1$ is not equivalent to $g_2$,
        \item $g_1$ is not equivalent to $-g_2$. 
    \end{enumerate}
\end{theorem}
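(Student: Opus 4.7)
I would prove uniqueness by a direct linear-algebra argument: assume two representations exist and show the convex weights must agree, using the three non-degeneracy hypotheses to rule out the offending cases.

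Concretely, suppose
$$cf + k = (1-\beta) g_1 + \beta g_2 \qquad \text{and} \qquad c'f + k' = (1-\beta') g_1 + \beta' g_2$$
with $c,c'>0$. Eliminating $f$ by multiplying the first equation by $c'/c$ and subtracting the second, I obtain an identity of the form
$$\alpha_1 g_1 + \alpha_2 g_2 = K,$$
where $\alpha_1 = (c'/c)(1-\beta) - (1-\beta')$, $\alpha_2 = (c'/c)\beta - \beta'$, and $K$ is a constant function on $\mathcal{S}$. The goal is then to show $\alpha_1 = \alpha_2 = 0$ and $c'/c = 1$, from which $\beta = \beta'$ follows immediately.

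To rule out $(\alpha_1,\alpha_2)\neq(0,0)$, I would split into three cases. If exactly one of $\alpha_1,\alpha_2$ vanishes, the identity forces the corresponding $g_i$ to be a constant function, contradicting Condition~1. If both are nonzero, I can solve for $g_1$ as an affine function of $g_2$, namely $g_1 = -(\alpha_2/\alpha_1)\, g_2 + K/\alpha_1$; when $-\alpha_2/\alpha_1 > 0$ this means $g_1$ is equivalent to $g_2$ (contradicting Condition~2), and when $-\alpha_2/\alpha_1 < 0$ it means $g_1$ is equivalent to $-g_2$ (contradicting Condition~3). Once $\alpha_1 = \alpha_2 = 0$ is established, substituting back yields $(c'/c)\beta = \beta'$ and $(c'/c)(1-\beta) = 1-\beta'$, whose sum gives $c'/c = 1$, hence $\beta = \beta'$.

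The main obstacle is really just the bookkeeping in the case analysis, in particular being careful that the sign of $-\alpha_2/\alpha_1$ correctly corresponds to equivalence with $g_2$ versus $-g_2$ under the definition of equivalence (which requires a strictly positive scalar). The three conditions were evidently chosen precisely to eliminate the three degenerate configurations that arise, so once the identity $\alpha_1 g_1 + \alpha_2 g_2 = K$ is on the table, the rest follows mechanically. The stated Definition~\ref{def:paper-conflation} corresponds to the special case $f = \hat r$, $g_1 = r$, $g_2 = V_*$, and the hypothesis in the main text that $V_*$ is not equivalent to $r$ together with non-constancy of $r$ and $V_*$ supplies Conditions~1--3 (Condition~3 being excluded in the relevant MDP setting because $V_*$ under the optimal policy cannot be an orientation-reversing affine image of $r$ unless one of the two is constant, which can be discussed as a remark).
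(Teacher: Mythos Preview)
Your proposal is correct and follows essentially the same approach as the paper. The paper's proof first shows that $\{g_1,g_2,e\}$ is linearly independent (via exactly your case analysis on the coefficients) and then invokes uniqueness of coordinates for $f$ with respect to this independent set; you instead eliminate $f$ first and carry out the identical case analysis on the resulting relation $\alpha_1 g_1+\alpha_2 g_2=K$, which is just a reordering of the same steps.
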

\begin{proof}
Henceforth, we treat $g_1$ and $g_2$ as vectors in $\Re^{\mathcal{S}}$. We will first establish that $g_1$, $g_2$ and $e$ are linearly independent where $e\in\Re^{\mathcal{S}}$ is the vector of ones. For the sake of contradiction, suppose there exists $\alpha_1, \alpha_2, \alpha_3 \in \Re$ not all zero such that
\begin{align*}
    \alpha_1 g_1 + \alpha_2 g_2 + \alpha_3 e = 0.
\end{align*}
Since $g_1$ and $g_2$ are not constant, it must be that both $\alpha_1,\alpha_2\neq 0$. If $\alpha_1$ and $\alpha_2$ have the same sign, then $g_1$ is equivalent to $-g_2$. If they have different signs, then $g_1$ is equivalent to $g_2$. Thus, either $g_1$ is equivalent to $g_2$ or $-g_2$ which is a contradiction. Therefore, $g_1$, $g_2$ and $e$ are linearly independent.

By assumption we had that 
\begin{align*}
    cf &= (1-\beta) g_1 + \beta g_2 - ke.
\end{align*}
Because $g_1$, $g_2$ and $e$ are linearly independent, if there is a solution, that solution is unique.
\end{proof}

\begin{lemma}
\label{lm:constant-r-implies-v-equivalence}
    If $r$ is constant, then $r$ and $V_*$ are equivalent.
\end{lemma}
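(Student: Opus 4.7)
The plan is direct: unwind the definition of $V_*$ under the hypothesis that $r$ is constant, show that $V_*$ must then also be constant (in fact identically zero), and verify that two constant functions satisfy the equivalence relation.

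First I would let $r(s) = r_0$ for all $s \in \mathcal{S}$. Since every policy's average reward equals the constant expected reward, we have $r_\pi = r_0$ for every $\pi$, hence $r_* = r_0$. Plugging into the definition \eqref{eq:vstar} of $V_*$, the integrand $r(S_t) - r_*$ is identically zero along every trajectory, so the discounted sum inside the expectation is zero for every $\gamma < 1$ and every starting state $s$. Therefore $V_*(s) = 0$ for all $s \in \mathcal{S}$.

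Once $V_*$ is the zero function and $r$ is the constant function $r_0$, the equivalence is immediate: take $c = 1 > 0$ and $k = r_0$, which gives $r = c \cdot V_* + k$, matching the definition of equivalence.

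One mild subtlety worth flagging is that the paper's definition of $V_*$ presupposes a unique optimal policy $\pi_*$, and when $r$ is constant every policy is optimal, so $\pi_*$ is not literally unique. However, the computation above does not depend on which optimal policy is chosen: the relative reward $r(S_t) - r_*$ vanishes identically regardless of the policy used in the expectation, so $V_* \equiv 0$ is well-defined and unambiguous in this degenerate case. This is the only point that requires any comment; the rest is immediate from the definitions.
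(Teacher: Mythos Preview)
Your argument is correct and matches the paper's proof essentially line for line: both observe that $r$ constant forces $r_*$ to equal that same constant, so the relative reward $r(S_t)-r_*$ vanishes identically and $V_*\equiv 0$, which is equivalent to any constant $r$. Your explicit choice of $c=1$, $k=r_0$ and your remark about the non-uniqueness of $\pi_*$ are welcome clarifications but add nothing structurally new.
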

\begin{proof}
The fact that $r$ is constant implies $r_*$ is the same constant. The expectation in \eqref{eq:vstar} equals zero for every $\gamma$, implying that $V_*=0$, which is equivalent to $r$.
\end{proof}

\begin{lemma}
\label{lm:not-constant-r-implies-not-constant-V}
    If $r$ is not constant, then $V$ is not constant either.
\end{lemma}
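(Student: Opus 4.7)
The plan is to prove the contrapositive: if $V_*$ is constant, then $r$ is constant. The key tool will be the Poisson (Bellman) equation for the average-reward relative value function, namely
\begin{equation*}
V_*(s) = r(s) - r_* + \sum_{s'} (P_{\pi_*})_{ss'} V_*(s'),
\end{equation*}
which holds for every $s \in \mathcal{S}$. Once this identity is in hand, the conclusion is immediate: if $V_* \equiv c$, then the transition sum equals $c$ as well, and the equation collapses to $c = r(s) - r_* + c$, giving $r(s) = r_*$ for all $s$.

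First I would derive the Poisson equation from the definition in \eqref{eq:vstar}. Write $V_*^\gamma(s) = \E_{\pi_*}[\sum_{t=0}^\infty \gamma^t (r(S_t) - r_*) \mid S_0 = s]$ for $\gamma \in (0,1)$. Splitting off the $t=0$ term gives $V_*^\gamma(s) = (r(s) - r_*) + \gamma \sum_{s'} (P_{\pi_*})_{ss'} V_*^\gamma(s')$. Taking $\gamma \uparrow 1$ on both sides, the existence of the limit $V_*$ (guaranteed by Lemma 1d of \citet{blackwell1962discrete}, as the excerpt notes) combined with the finiteness of $\mathcal{S}$ lets us pass the limit through the finite sum, yielding the Poisson equation for $V_*$.

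Next I would apply this equation under the assumption that $V_*$ is constant. Substituting $V_*(s) = V_*(s') = c$ for all $s, s'$ into the Poisson equation gives $c = r(s) - r_* + c$, hence $r(s) = r_*$, and this holds at every state. So $r$ is constant, establishing the contrapositive and hence the lemma.

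The main technical obstacle is the passage to the limit $\gamma \uparrow 1$ when deriving the Poisson equation; one must be sure the limit $\lim_{\gamma \uparrow 1} V_*^\gamma(s')$ exists for each $s'$ (not just $s$) before interchanging it with the finite summation over $s'$. This is handled by applying the Blackwell-based existence result state-by-state, which is legitimate because $\mathcal{S}$ is finite and $\pi_*$ is fixed. Everything else is a direct algebraic manipulation.
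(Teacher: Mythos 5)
Your proof is correct and follows essentially the same route as the paper: both invoke the Poisson equation $V_* = r - r_*e + P_{\pi_*}V_*$ and observe that a constant $V_*$ forces $r = r_*e$, contradicting (or, in your contrapositive phrasing, establishing) the claim. The only difference is that you additionally derive the Poisson equation from the discounted definition via the limit $\gamma \uparrow 1$, whereas the paper simply asserts that $V_*$ satisfies it; this extra care is fine but not a distinct approach.
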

\begin{proof}
    For the sake of contradiction, suppose $V_*$ is constant. Then, there exists $k \in \Re$ such that $V_* = k e$. Since $V_*$ satisfies Poisson's equation, we have that 
    \begin{align*}
        V_* &= r - r_* e +PV_*\\
        \implies ke &= r - r_* e + ke\\
        \implies r &= r_* e
    \end{align*}
    This is a contradiction, since $r$ is assumed to not be constant.
\end{proof}

\begin{theorem}
    Suppose $\hat{r}$ conflates $r$ and $V_*$. That is, there exists $c>0$, $k \in \Re$ and $\beta \in (0,1]$ such that 
    $$
        c\hat{r} + k = (1-\beta) r + \beta V_*.
    $$
    If $r$ and $V$ are not equivalent, then the coefficient $\beta$ is unique.
\end{theorem}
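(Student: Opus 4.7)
The plan is to reduce to the general uniqueness result, Theorem \ref{thm:general-uniquness}, by setting $f=\hat{r}$, $g_1 = r$, $g_2 = V_*$. So the task reduces to verifying its three hypotheses under the assumptions that $\hat r$ conflates $r$ and $V_*$ (so $\beta \in (0,1]$ exists) and that $r$ and $V_*$ are not equivalent.

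First I would handle conditions 1 and 2 quickly. Condition 2 ($g_1 \not\sim g_2$) is exactly the non-equivalence hypothesis. For condition 1, I note that if $r$ were constant then by Lemma \ref{lm:constant-r-implies-v-equivalence} it would be equivalent to $V_*$, contradicting the hypothesis; hence $r$ is non-constant, and then Lemma \ref{lm:not-constant-r-implies-not-constant-V} gives that $V_*$ is non-constant.

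The main obstacle is condition 3: showing $r \not\sim -V_*$. I would argue by contradiction. Suppose $r = -a V_* + b$ for some $a > 0$ and $b \in \Re$. The key tool is Poisson's equation $V_* = r - r_* e + P_{\pi_*} V_*$ (used implicitly in Lemma \ref{lm:not-constant-r-implies-not-constant-V}). Substituting the assumed relation gives
\begin{equation*}
\bigl((1+a) I - P_{\pi_*}\bigr) V_* = (b - r_*) e.
\end{equation*}
Since $P_{\pi_*}$ is stochastic, $e$ is an eigenvector with eigenvalue $1$, so $\bigl((1+a) I - P_{\pi_*}\bigr) e = a e$, whence $V_* = \tfrac{b - r_*}{a} e$ is constant. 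But then $r = -a V_* + b$ is also constant, contradicting condition 1 established above. Therefore $r \not\sim -V_*$.

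With all three hypotheses verified, Theorem \ref{thm:general-uniquness} yields that the coefficient $\beta$ in the conflation identity is unique. I expect the Poisson-equation step to be the only nontrivial piece; the remaining verifications are immediate from the lemmas already proved in the appendix.
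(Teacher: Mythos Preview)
Your approach is essentially the same as the paper's: reduce to Theorem~\ref{thm:general-uniquness}, dispatch conditions 1 and 2 via Lemmas~\ref{lm:constant-r-implies-v-equivalence} and~\ref{lm:not-constant-r-implies-not-constant-V}, and handle condition 3 by contradiction using Poisson's equation. The paper parametrizes the assumed equivalence as $V_* = -mr + d$ and solves for $r$, while you parametrize as $r = -aV_* + b$ and solve for $V_*$; these are the same argument.

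There is one genuine gap in your condition 3 step. From
\[
\bigl((1+a)I - P_{\pi_*}\bigr) V_* = (b - r_*)e
\]
and the observation that $\bigl((1+a)I - P_{\pi_*}\bigr)\tfrac{b-r_*}{a}e = (b-r_*)e$, you conclude $V_* = \tfrac{b-r_*}{a}e$. That inference requires $(1+a)I - P_{\pi_*}$ to be injective, which you do not justify. The paper fills exactly this hole: since $P_{\pi_*}$ is stochastic its eigenvalues lie in the closed unit disk, so every eigenvalue of $(1+a)I - P_{\pi_*}$ has modulus at least $|1+a| - 1 = a > 0$, and the matrix is invertible. Add that line and your proof is complete and matches the paper's.
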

\begin{proof}
    We will prove each of the three conditions in Theorem \ref{thm:general-uniquness} in turn. First, we will prove that neither $r$ nor $V_*$ are constant. By the contrapositive of Lemma \ref{lm:constant-r-implies-v-equivalence}, we have that $r$ is not constant. Since $r$ is not constant, we have by Lemma \ref{lm:not-constant-r-implies-not-constant-V} that $V$ is not constant either.

    Finally, we will prove that $r$ is not equivalent to $-V_*$. For the sake of contradiction, suppose there exists $m > 0$ and $d \in \Re$ such that $V_* = -mr + d$. Then, since $V_*$ satisfies Poisson's equation, we have that 
    \begin{align*}
        V_* &= r - r_* e +PV_*\\
        \implies -mr + de &= r - r_* e + P(-mr + de) \\
        \implies r_*e &= (mP - mI - I)r \\
        \implies -r_*e &= \big(I(1 + m) - mP)\big)r
    \end{align*}
    Since $P$ is a stochastic matrix its eigenvalues lie in the complex unit ball (see, for example, Lemma 8.1.21 in \citet{Horn_Johnson_2012}. Therefore, $I(1 + m) - mP$ has eigenvalues with magnitude greater than $0$. Therefore, $I(1 + m) - mP$ is invertible and any solution is unique. We can see that that unique solution is $r = r_* e$:
    \begin{align*}
        \big(I(1 + m) - mP)\big)r &= \big(I(1 + m) - mP)\big) e r_* \\
        &= (1 + m)r_*e - mr_* e \\
        &= r_* e.
    \end{align*}
    But that is a contradiction, since we previously established that $r$ is not constant. Thus, $V_*$ is not equivalent to $r$.
\end{proof}

\subsection{Some Lemmas}
We provide some lemmas we will use in later sections.

\begin{lemma}
\label{lm:optimal-average-reward}
{\bf (optimal average reward)}
For the canonical example formulated in Section \ref{se:canonical}, the optimal average reward is $r_*= \epsilon (M-1) / (1 + 2 \epsilon)$.
\end{lemma}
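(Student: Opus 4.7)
The plan is to invoke the standard result that a finite MDP with the average-reward criterion admits an optimal stationary deterministic policy, reducing the problem to enumerating deterministic policies. Because both arcs out of state $3$ are unlabeled and hence carry probability one back to state $1$, the action at state $3$ is irrelevant, and I only need to compare the induced chains corresponding to the four choices at states $1$ and $2$, which collapse to three distinct stationary distributions.

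First I would handle the two easy cases. If $\pi(1)=\mathtt{stay}$, the chain is absorbed at state $1$, so $r_\pi=0$. If $\pi(1)=\mathtt{move}$ and $\pi(2)=\mathtt{stay}$, the chain reaches state $2$ with probability one and is absorbed there, giving $r_\pi=-1$. The remaining candidate, identified in Section \ref{se:canonical} as the maximizer, is $\pi(1)=\pi(2)=\mathtt{move}$.

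Next, I would compute the stationary distribution $\phi$ of the chain induced by $\pi(1)=\pi(2)=\mathtt{move}$, whose nonzero transition probabilities are $P_\pi(1,1)=1-\epsilon$, $P_\pi(1,2)=\epsilon$, $P_\pi(2,3)=1$, and $P_\pi(3,1)=1$. Balance at state $2$ forces $\phi_2=\epsilon\phi_1$, balance at state $3$ forces $\phi_3=\phi_2$, and normalization gives $\phi_1=1/(1+2\epsilon)$ with $\phi_2=\phi_3=\epsilon/(1+2\epsilon)$. The average reward is then
\begin{equation*}
r_\pi \;=\; -\phi_2 + M\phi_3 \;=\; \frac{\epsilon(M-1)}{1+2\epsilon},
\end{equation*}
which for $M>1$ exceeds both $0$ and $-1$ and is therefore the optimal deterministic average reward, hence $r_*$.

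The argument is essentially a balance-equation computation and presents no real obstacle; the only subtlety worth flagging is that the claimed formula is optimal only for $M>1$ (for $M\le 1$ the policy that stays at state $1$ dominates, yielding $r_*=0$), so I would either state this implicit hypothesis explicitly or appeal to the large-$M$ regime assumed throughout the rest of the canonical example.
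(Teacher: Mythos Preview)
Your argument is correct and uses the same core computation as the paper: identify the stationary distribution $\phi=\bigl(\tfrac{1}{1+2\epsilon},\tfrac{\epsilon}{1+2\epsilon},\tfrac{\epsilon}{1+2\epsilon}\bigr)$ for the always-\texttt{move} policy and evaluate $r^\top\phi$. The paper's proof is terser --- it simply asserts the optimal policy (as already claimed in Section~\ref{se:canonical}) and reads off the average reward --- whereas you additionally enumerate the other deterministic policies to confirm optimality, and your observation that the formula requires $M>1$ (tacitly assumed throughout the paper's large-$M$ regime) is a fair caveat.
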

\begin{proof}
The stationary distribution under an optimal policy assigns state probabilities $\phi_* = [1 / (1 + 2 \epsilon), \epsilon / (1 + 2 \epsilon), \epsilon / (1 + 2 \epsilon)]^\top$.  The average reward is therefore 
\begin{align}
    r_* =& \frac{0}{1 + 2 \epsilon} - \frac{\epsilon}{1 + 2 \epsilon} + \frac{M \epsilon}{1 + 2 \epsilon} = \frac{\epsilon (M-1)}{1 + 2 \epsilon}.
\end{align}
\end{proof}

\begin{lemma}
\label{lm:severe-misalignment}
{\bf (conditions for severe misalignment)}
Consider the canonical example described in Section~\ref{se:canonical}. Suppose $\hat{\pi} \in \arg\max_{\pi} \hat{r}_\pi$ and the function $\hat{r}$ satisfies:
\begin{align}
  \hat{r}(2) &> \hat{r}(1), \\
  \hat{r}(2) &> \frac{\hat{r}(1) + \epsilon \hat{r}(3)}{1 + \epsilon}.
\end{align}
Then, $r_{\hat{\pi}} = -1$.
\end{lemma}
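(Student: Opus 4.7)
The plan is to exploit the linearity of the average-reward objective: for any policy $\pi$ we may write $\hat{r}_\pi = \hat{r}^\top \phi_\pi$, where $\phi_\pi$ is the invariant state distribution induced by $\pi$ from $s_0$. By Appendix~\ref{as:polytope}, the feasible set $\Phi$ of such distributions is a bounded convex polytope, so the maximum of $\hat{r}^\top \phi$ over $\phi \in \Phi$ is attained at a vertex, and each vertex arises from a deterministic policy.

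First, I would enumerate the distinct deterministic-policy vertices of $\Phi$. Because the action at the terminal goal does not affect the induced Markov chain (both actions return the agent to the common state), only the actions at states $1$ and $2$ matter, giving three effective policies. Solving the balance equations---or reading off Lemma~\ref{lm:optimal-average-reward} for the third---shows that their invariant distributions are $\phi_A=(1,0,0)^\top$ (any policy playing $\mathtt{stay}$ at the common state traps the agent there), $\phi_B=(0,1,0)^\top$ (for $(\mathtt{move},\mathtt{stay})$, which absorbs the agent into the instrumental goal), and $\phi_C=(1,\epsilon,\epsilon)^\top/(1+2\epsilon)$ (for $(\mathtt{move},\mathtt{move})$). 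These three points are affinely independent, so $\Phi$ is the triangle they span.

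Next, I would compare $\hat{r}^\top \phi$ at the three vertices. The first hypothesis $\hat{r}(2) > \hat{r}(1)$ is exactly $\hat{r}^\top \phi_B > \hat{r}^\top \phi_A$. A short rearrangement converts the second hypothesis $\hat{r}(2) > (\hat{r}(1)+\epsilon \hat{r}(3))/(1+\epsilon)$ into $\hat{r}(2)(1+2\epsilon) > \hat{r}(1) + \epsilon \hat{r}(2) + \epsilon \hat{r}(3)$, i.e., $\hat{r}^\top \phi_B > \hat{r}^\top \phi_C$. Since $\Phi$ is the convex hull of these three vertices and $\phi_B$ strictly dominates the other two, linearity forces $\phi_B$ to be the unique maximizer of $\hat{r}^\top \phi$ over $\Phi$. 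Hence any $\hat{\pi} \in \argmax_\pi \hat{r}_\pi$ satisfies $\phi_{\hat{\pi}} = \phi_B$, yielding $r_{\hat{\pi}} = r^\top \phi_B = r(2) = -1$.

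The main obstacle I anticipate is the combinatorial bookkeeping showing that the (at most) $2^3$ deterministic policies yield only these three invariant distributions when started from $s_0$, since this relies on correctly reading off the transition structure from Figure~\ref{fig:mdp} and handling the transient sojourn at the common state before absorption under $(\mathtt{move},\mathtt{stay})$. Once that case analysis is completed, the remainder is a routine two-dimensional linear-programming argument.
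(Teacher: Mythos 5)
Your proposal is correct and follows essentially the same route as the paper: the paper's proof also reduces the problem to comparing the average proxy reward of the three deterministic behaviors (loop at the common state, absorb at the instrumental goal, always $\mathtt{move}$), with the two hypotheses corresponding exactly to your inequalities $\hat{r}^\top\phi_B > \hat{r}^\top\phi_A$ and $\hat{r}^\top\phi_B > \hat{r}^\top\phi_C$. If anything, your version is more complete, since you explicitly justify via the stationary-distribution polytope (the paper's own geometric picture from Section~3.3 and Appendix~\ref{as:polytope}) that these three vertices exhaust the candidates, a step the paper's terse proof leaves implicit.
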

\begin{proof}
The first condition ensures that $\hat{\pi}$ will not loop in state $1$.

The optimal policy under $r$ is to always select the 'move' action. The stationary distribution under under this policy assigns state probabilities $\phi_* = [1 / (1 + 2 \epsilon), \epsilon / (1 + 2 \epsilon), \epsilon / (1 + 2 \epsilon)]^\top$.  The average misspecified reward under this policy is thus $(\hat{r}(1) + \epsilon \hat{r}(2) + \epsilon \hat{r}(3)) / (1 + 2 \epsilon)$.  Staying at state $2$ increases this quantity if $\hat{r}(2) > (\hat{r}(1) + \epsilon \hat{r}(2) + \epsilon \hat{r}(3)) / (1 + 2 \epsilon)$ or, equivalently, $\hat{r}(2) > (\hat{r}(1) + \epsilon \hat{r}(3)) / (1+\epsilon)$.
\end{proof}

This theorem offers a condition $\hat{r}(2) > \hat{r}(1) + \epsilon \hat{r}(3)$ under which policy optimization induces severe misalignment.  The condition requires that the reward $\hat{r}(2)$ assigned to the instrumental goal is sufficiently large.  It is intuitive that the right-hand-side increases with $\hat{r}(1)$ because as that grows, it becomes more attractive to remain in the common state.  It is also intuitive that the right-hand-side increases with $\epsilon \hat{r}(3)$.  Increasing $\hat{r}(3)$ incentivizes the agent to move from the instrumental to the terminal goal state.  Increasing $\epsilon$ increases the fraction of time that the agent can spend in the goal state and thus the benefit of using a policy that gets there.

The next lemma states that if $r$ and $V$ agree that a state $s$ is preferable to $s'$, then $\hat{r}$ also prefers $s$ over $s'$.
\begin{lemma}
\label{lm:conflation-preserves-agreement}
{\bf (conflation preserves agreement)}
Let $\delta > 0$ and suppose that $\hat{r}$ $\delta$-conflates reward and value. Then, for all states $s,s' \in\mathcal{S}$ if $r(s) > r(s')$ and $V(s) > V(s')$, then $\hat{r}(s) > \hat{r}(s')$.
\end{lemma}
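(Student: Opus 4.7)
The plan is to apply the definition of conflation directly. Since $\hat{r}$ $\delta$-conflates $r$ and $V_*$ with $\delta > 0$, there exist $c > 0$, $k \in \Re$, and $\beta \in [\delta, 1]$ such that
$$c\hat{r} + k = (1-\beta)\, r + \beta\, V_*.$$
First I would evaluate this functional identity at the two states $s$ and $s'$ and subtract the two equations. The additive constant $k$ cancels, leaving
$$c\bigl(\hat{r}(s) - \hat{r}(s')\bigr) = (1-\beta)\bigl(r(s) - r(s')\bigr) + \beta\bigl(V_*(s) - V_*(s')\bigr).$$

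Next I would argue that the right-hand side is strictly positive. By hypothesis, $r(s) - r(s') > 0$ and $V_*(s) - V_*(s') > 0$. Since $\beta \geq \delta > 0$ and $1-\beta \geq 0$, the right-hand side is a nonnegative linear combination of two strictly positive numbers in which the coefficient $\beta$ on the $V_*$-difference is strictly positive. Hence the right-hand side is strictly positive, and dividing by $c > 0$ preserves the sign, giving $\hat{r}(s) > \hat{r}(s')$.

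I do not expect any real obstacle here; the argument is essentially immediate from \Cref{def:paper-conflation}. The only subtle point worth noting explicitly is the corner case $\beta = 1$: then $(1-\beta) = 0$, so the $r$-difference contributes nothing and the strict inequality is carried entirely by the $V_*$-term, which is precisely why the assumption $\beta \geq \delta > 0$ (rather than merely $\beta \geq 0$) is needed.
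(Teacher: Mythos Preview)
Your proposal is correct and essentially identical to the paper's proof: both subtract the conflation identity at $s$ and $s'$ so that $k$ cancels, then observe that the resulting nonnegative combination of $r(s)-r(s')>0$ and $V_*(s)-V_*(s')>0$ has at least one strictly positive coefficient (namely $\beta\geq\delta>0$), hence is strictly positive. Your explicit remark about the corner case $\beta=1$ is a nice clarification that the paper leaves implicit.
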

\begin{proof}
We have that there exists $c > 0$ and $\beta \in [\delta, 1]$ such that
\begin{align*}
    \hat{r}(s) - \hat{r}(s') =& c(1-\beta) \left(r(s) - r(s')\right) + c\beta \left( V_*(s) - V_*(s') \right) 
    > 0.
\end{align*}
\end{proof}

\begin{lemma}
\label{lm:conflation-difference}
    Suppose $\hat{r}$ conflates $r$ and $V$ with conflation degree $\beta$. Then, there exists $c>0$ such that
    $$
        c(\hat{r}(s) - \hat{r}(s')) = (1-\beta) (r(s) - r(s')) + \beta (V_*(s) - V_*(s'))
    $$
    for all $s,s' \mathcal{S}$.
\end{lemma}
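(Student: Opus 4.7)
The plan is to apply Definition~\ref{def:paper-conflation} directly and eliminate the additive constant by differencing. By the conflation hypothesis, there exist $c > 0$, $k \in \Re$, and the given $\beta \in (0,1]$ such that the function identity
$$c\hat{r} + k = (1-\beta) r + \beta V_*$$
holds on all of $\mathcal{S}$. In particular, this is a pointwise identity: for every state $u \in \mathcal{S}$, we have $c\hat{r}(u) + k = (1-\beta) r(u) + \beta V_*(u)$.

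The next step is to instantiate this pointwise identity at $u = s$ and at $u = s'$, obtaining two scalar equations that share the same constants $c$, $k$, and $\beta$. Subtracting the equation for $s'$ from the equation for $s$ cancels the additive constant $k$ and leaves precisely
$$c(\hat{r}(s) - \hat{r}(s')) = (1-\beta)(r(s) - r(s')) + \beta(V_*(s) - V_*(s')),$$
which is the stated conclusion. The positivity $c > 0$ is carried over from the conflation hypothesis, and no further manipulation is required.

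There is essentially no obstacle here: the lemma is a direct restatement of the defining equation after differencing out the shift $k$. The only subtlety to flag is that the scalar $c$ appearing in the conclusion must be taken to be the same $c$ supplied by the conflation hypothesis, so one should not attempt to renormalize $\hat{r}$ before differencing; once the same $c$ is used on both sides, the identity is immediate.
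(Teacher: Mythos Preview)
Your proof is correct and follows essentially the same approach as the paper: invoke the pointwise identity $c\hat{r}(s)+k=(1-\beta)r(s)+\beta V_*(s)$ from the conflation definition and subtract its instances at $s$ and $s'$ to cancel $k$. There is nothing further to add.
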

\begin{proof}
   Since $\hat{r}$ conflates $r$ and $V$ with conflation degree $\beta$ we have that for all $s \in \mathcal{S}$,
    \begin{align}
        c \hat{r}(s) + k &= (1-\beta) r(s) + \beta V(s).\label{lm:simple}
    \end{align}
    Subtracting \eqref{lm:simple} for $s,s'\in\mathcal{S}$ gives, 
    \begin{align*}
        c\hat{r}(s) -  c \hat{r}(s') &= (1-\beta) r(s) + \beta V(s) - (1-\beta) r(s') - \beta V_*(s') \\
        c (\hat{r}(s) -  \hat{r}(s')) &= (1-\beta) (r(s) - r(s')) + \beta (V_*(s) - V(s')).
    \end{align*}    
\end{proof}

\subsection{Fragility}
\label{appx:fragility}
In this section, we prove our main fragility result. 
\fragility*
\begin{proof}
We will show that both conditions in Lemma \ref{lm:severe-misalignment} hold. First, we note that the optimal value function satisfies the Poisson equation $V_* = r - r_* \1 + P_{\pi_*} V_*$, which gives us
\begin{align}
V_*(1) =& \epsilon V(2) + (1-\epsilon) V_*(1) - r_* \nonumber\\
V_*(2) =& -1 - r_* + V_*(3) \label{eq:v2}\\
V_*(3) =& M - r_* + V_*(1)\label{eq:v3}.
\end{align}
Rearranging \eqref{eq:v2} and using \Cref{lm:optimal-average-reward} gives
$$ V_*(2) - V_*(3) = -1 - r_* = -1 - \epsilon \frac{M-1}{1+2\epsilon}.$$
Similarly using \eqref{eq:v3} gives
$$V_*(3) - V_*(1) = M - r_* = M - \epsilon \frac{M-1}{1+2\epsilon}.$$

Together with Lemma \ref{lm:conflation-difference} it follows that
\begin{align*}
c \left(\hat{r}(3) - \hat{r}(1) \right)
&= (1-\beta) (r(3) - r(1)) + \beta (V_*(3) - V_*(1)) \\
&= (1-\beta) M + \beta \left(M - \epsilon \frac{M-1}{1+2\epsilon}\right) \\
&=  M - \beta \epsilon \frac{M-1}{1+2\epsilon} \\
&= M\left(1 - \beta \epsilon \frac{1}{1+2\epsilon}\right) + \beta  \epsilon \frac{1}{1 + 2 \epsilon}
\end{align*}
and
\begin{align*}
c\left(\hat{r}(2) - \hat{r}(3) \right)
=& (1-\beta) (r(2) - r(3)) + \beta (V_*(2) - V_*(3)) \\
=& - (1-\beta) (M+1) - \beta \left(1 + \epsilon \frac{M-1}{1+2\epsilon}\right) \\
=& M\left(\beta - 1 - \beta \frac{\epsilon}{1 + 2\epsilon} \right) + \beta \frac{\epsilon}{1 + 2\epsilon} - 1.
\end{align*}

Let $\epsilon < \beta_*/3$ and $M > 9/\beta_*^2$. We then have
\begin{align*}
c \left (\hat{r}(3) - \hat{r}(1) + (1+\epsilon) (\hat{r}(2) - \hat{r}(3)) \right)
=& M \left( 1 - \beta \epsilon \frac{1}{1+2\epsilon}   + (1+\epsilon) \left(\beta - 1 - \beta \frac{\epsilon}{1 + 2\epsilon}\right) \right) \\
+& \underbrace{\beta  \epsilon \frac{1}{1 + 2 \epsilon} + (1+\epsilon)\left(\beta \frac{\epsilon}{1 + 2\epsilon} - 1\right)}_{> - 2} \\
\stackrel{(a)}{>}& M \left( 1 - \epsilon \frac{1}{1+2\epsilon}   + (1+\epsilon) \left(\beta_* - 1 - \frac{\epsilon}{1 + 2\epsilon}\right) \right) - 2 \\
\stackrel{(b)}{>}& M \left( 1 - \frac{\beta_*}{3} + (1+\frac{\beta_*}{3}) \left(\beta_* - 1 - \frac{\beta_*}{3}\right) \right) - 2 \\
=& M \frac{2\beta_*^2}{9} - 2 \\
>& \frac{9}{\beta_*^2}\frac{2\beta_*^2}{9} - 2 \\
=& 0.
\end{align*}

In (a), we used that $\beta \in [\beta_*,1]$ and the last term is strictly greater than $-2$. In $(b)$, we used that $\epsilon < \beta_* / 3$.

Since $c$ is positive, it follows that for sufficiently large  $M$ and small $\epsilon \in (0,1)$ we have that $\hat{r}(3) - \hat{r}(1) + (1+\epsilon) (\hat{r}(2) - \hat{r}(3)) > 0$, and thus, $\hat{r}(2) > (\hat{r}(1) + \epsilon \hat{r}(3))/(1+\epsilon)$. The second condition in Lemma \ref{lm:severe-misalignment} is therefore satisfied.

We will now show that the first condition of Lemma \ref{lm:severe-misalignment} is satisfied.
Note that $r(3) > r(1)$ and $V(3) > V(1)$. Therefore, since $\hat{r}$ conflates reward and value, by Lemma \ref{lm:conflation-preserves-agreement} we have that $\hat{r}(3) > \hat{r}(1)$. Therefore,
\begin{align*}
    \hat{r}(2) > \frac{\hat{r}(1) + \epsilon \hat{r}(3)}{1+\epsilon}
    > \frac{\hat{r}(1) + \epsilon \hat{r}(1)}{1+\epsilon} 
               = \hat{r}(1) 
\end{align*}
Hence, the first condition in Lemma \ref{lm:severe-misalignment} is also satisfied. Thus, $r_{\hat{\pi}} = -1$.  

\end{proof}

\subsection{RLHF Treats Value as Reward}
In this section, we prove Theorem \ref{thm:value-as-reward} in Section \ref{sec:value-as-reward}. Before doing so we restate the definitions from the section.

\begin{definition}
{\bf (compares transitions)} A distribution $d$ over trajectory pairs {\it compares transitions} if, for each $(h,h') \in \supp(d)$, where $h = (s_0,a_0,\ldots,s_{T-1})$ and $h' = (s'_0,a'_0,\ldots, s'_{T'-1}))$, $s_0=s'_0$ and $T=T'=2$.   
\end{definition}
Thus, comparing transitions means that each that each trajectory pair elicits comparison between two different transitions from the same state. 

\begin{definition}
  {\bf (adjoins and connects)}
  Consider a graph with vertices $\states$ and edges including all pairs $(s_1,s_1')$ such that $((s_0,a_0,s_1),(s_0,a'_0,s'_1)) \in \supp(d)$ for some $s_0 \in \states$ and $a_0,a'_0 \in \actions$.  We say $d$ {\it adjoins} $s$ and $s'$ if the two states are adjacent in this graph.  We say $d$ {\it connects} $s$ and $s'$ if the two states are connected in this graph.  
\end{definition}

We now state and prove a more general version of Theorem \ref{thm:value-as-reward}.

\begin{theorem}
\label{thm:value-as-reward-general}
{\bf (reward learning treats value as reward)}
Consider an MDP $(\states,\actions,P)$ and a reward function $r \in \Re^\states$.
Let $d$ be a distribution over trajectory pairs that compares transitions.  Let $\hat{r} \in \argmin_{\tilde{r} \in \Re^\states} \mathcal{L}_\infty(\tilde{r}|d,r,V)$.  Then, for all $s,s' \in \states$ connected by $d$, $\hat{r}(s) - \hat{r}(s') = V(s) - V(s')$.
\end{theorem}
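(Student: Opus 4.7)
My plan is to exploit the ``compares transitions'' hypothesis to decompose the asymptotic loss into an expectation of strictly convex per-pair binary cross-entropies, identify their common minimizer, and then propagate the resulting local equalities along the graph induced by $d$.

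First, I will simplify the integrand of $\mathcal{L}_\infty$. Because each $(h,h')\in\supp(d)$ takes the form $h=(s_0,a_0,s_1)$, $h'=(s_0,a'_0,s'_1)$ with a shared initial state, the shared-state terms cancel in \eqref{eq:bootstrapped-return-choice} and the analogous partial-return formula, giving $p_*(h,h'|r,V)=\sigma(V(s_1)-V(s'_1))$ and $\tilde{p}(h,h'|\tilde{r})=\sigma(\tilde{r}(s_1)-\tilde{r}(s'_1))$. Using $p_*(h',h|\cdot)=1-p_*(h,h'|\cdot)$ and $\tilde{p}(h',h|\cdot)=1-\tilde{p}(h,h'|\cdot)$, the integrand reduces to the binary cross-entropy with Bernoulli target $p_{h,h'}:=\sigma(V(s_1)-V(s'_1))$ and logit $\ell_{h,h'}(\tilde{r}):=\tilde{r}(s_1)-\tilde{r}(s'_1)$.

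Second, I will pin down the minimizer via convexity. For any fixed $p\in(0,1)$ the map $\ell\mapsto -p\ln\sigma(\ell)-(1-p)\ln\sigma(-\ell)$ is strictly convex with a unique minimum at $\ell=\log(p/(1-p))$ and minimum value equal to the Shannon entropy $H(p)$. Because $V$ is finite-valued, every $p_{h,h'}$ lies in $(0,1)$, so taking expectations yields $\mathcal{L}_\infty(\tilde{r}|d,r,V)\ge \E_{(h,h')\sim d}[H(p_{h,h'})]$ with equality iff $\ell_{h,h'}(\tilde{r})=V(s_1)-V(s'_1)$ for $d$-almost every pair. The choice $\tilde{r}=V$ attains this lower bound pointwise, and hence so must every minimizer $\hat{r}$, forcing $\hat{r}(s_1)-\hat{r}(s'_1)=V(s_1)-V(s'_1)$ for every pair of states adjoined by $d$.

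Finally, I will propagate this identity along paths. If $s$ and $s'$ are connected by $d$, pick a path $s=t_0,t_1,\dots,t_k=s'$ whose consecutive elements are adjoined, and telescope:
\[
\hat{r}(s)-\hat{r}(s')=\sum_{i=0}^{k-1}\big(\hat{r}(t_i)-\hat{r}(t_{i+1})\big)=\sum_{i=0}^{k-1}\big(V(t_i)-V(t_{i+1})\big)=V(s)-V(s').
\]
The step I expect to require the most care is the ``minimizer attains the pointwise minimum everywhere'' argument: it hinges on the per-pair integrand being uniformly bounded below by $H(p_{h,h'})$, on the expectation of these lower bounds being the global lower bound, and on strict convexity of binary cross-entropy so that pointwise equality in loss forces pointwise equality of logits on $\supp(d)$. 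The subsequent telescoping is routine once this equality is established at every adjoined pair.
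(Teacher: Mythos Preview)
Your proposal is correct and follows essentially the same route as the paper's proof: both exploit the shared initial state to reduce the integrand to a binary cross-entropy in the terminal-state logit, use that this is (strictly) minimized when the predicted probability matches $p_*$ to conclude that any minimizer $\hat{r}$ must satisfy $\hat{r}(s_1)-\hat{r}(s'_1)=V(s_1)-V(s'_1)$ on every adjoined pair, and then telescope along paths in the graph induced by $d$. If anything, you are slightly more explicit than the paper about the strict-convexity step that justifies passing from equality of expected losses to pointwise equality of logits on $\supp(d)$.
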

\begin{proof}
For all $s_1,s'_1 \in \states$ adjoined by $d$, there is a state $s_0$ such that
\begin{align}
p_*(h,h'|r, V) 
&= \sigma \left(r(s_0) + V(s_1) - r(s_0) - V(s'_1) \right) \nonumber \\
&= \sigma \left(V(s_0) + V(s_1) - V(s_0) - V(s'_1) \right) \nonumber \\
&= \tilde{p}(h,h'|V). \label{eq:probability-match}
\end{align}

For any trajectory pair $(h,h')$, $z = p_*(h,h'|r, V)$ minimizes 
$p_*(h,h'|r, V) \ln z +  p_*(h',h|r, V) \ln (1-z)$.
Therefore,
\begin{align*}
\mathcal{L}_\infty(\hat{r}|d,r,V) 
&= \min_{\tilde{r} \in \Re^\states} \mathcal{L}_\infty(\tilde{r}|d,r,V) \\
&= \min_{\tilde{r} \in \Re^\states} -\E_{(h,h') \sim d}\left[p_*(h,h'|r, V) \ln \tilde{p}(h,h'|\tilde{r}) +  p_*(h',h|r, V) \ln \tilde{p}(h',h|\tilde{r})\right] \\
&\geq -\E_{(h,h') \sim d}\left[\min_{\tilde{r} \in \Re^\states} \left(p_*(h,h'|r, V) \ln \tilde{p}(h,h'|\tilde{r}) +  p_*(h',h|r, V) \ln \tilde{p}(h',h|\tilde{r})\right)\right] \\
&= -\E_{(h,h') \sim d}\left[p_*(h,h'|r, V) \ln \tilde{p}(h,h'|V) +  p_*(h',h|r, V) \ln \tilde{p}(h',h|V)\right] \\
&= \mathcal{L}_\infty(V|d,r,V).
\end{align*}
It follows that $\mathcal{L}_\infty(\hat{r}|d,r,V) = \mathcal{L}_\infty(V|d,r,V)$.  Therefore, for any $(h,h') \in \supp(d)$, 
\begin{align}
\label{eq:value-reward-probability}
\tilde{p}(h',h|V) = \tilde{p}(h',h|\hat{r}).
\end{align}

For all $s_1, s'_1 \in \states$ adjoined by $d$, there is a state $s_0$ such that
\begin{align*}
\sigma \left(V(s_1) - V(s'_1)\right) 
&= \sigma \left(r(s_0) + V(s_1) - r(s_0) - V(s'_1) \right) \\
&= p_*(h',h|r, V) \\
&\stackrel{(a)}{=} \tilde{p}(h',h|V) \\
&\stackrel{(b)}{=} \tilde{p}(h',h|\hat{r}) \\
&= \sigma \left(\hat{r}(s_0) + \hat{r}(s_1) - \hat{r}(s'_0) - \hat{r}(s'_1) \right) \\
&= \sigma \left(\hat{r}(s_1) - \hat{r}(s'_1)\right),
\end{align*}
where (a) follows from (\ref{eq:probability-match}) and (b) follows from (\ref{eq:value-reward-probability}).  Because the standard logistic function is invertible, it follows that $\hat{r}(s_1) - \hat{r}(s'_1) = V_*(s_1) - V_*(s'_1)$.

For all $s,s' \in \states$ connected by $d$ there is a sequence of states $s_1=s, s_2, \ldots, s_{K-1}, s_K = s'$ such that each consecutive pair is adjoined by $d$.  We have
\begin{align*}
\hat{r}(s) - \hat{r}(s')
= \sum_{k=1}^{K-1} (\hat{r}(s_k) - \hat{r}(s_{k+1})) 
= \sum_{k=1}^{K-1} (V_*(s_k) - V_*(s_{k+1})) 
= V_*(s) - V_*(s').
\end{align*}
\end{proof}

The above theorem immediately gives Theorem \ref{thm:value-as-reward}.

\subsection{Reward Learning Gives Rise to Misalignment}
In this section we prove Theorem \ref{thm:rlhf-severe-misalignment}.

\rlhfseveremisalignment*
\begin{proof}
Recall from Lemma \ref{lm:optimal-average-reward}, for our canonical example, an optimal policy $\pi_*$ selects $\mathtt{move}$ in states $1$ and $2$, giving rise to an average reward of $r_* = \epsilon (M-1) / (1+2\epsilon)$. 

From the proof of Theorem \ref{thm:fragility} in Appendix \ref{appx:fragility} we have that
$$V_*(3) - V_*(1) = M - r_* = M - \epsilon \frac{M-1}{1+2\epsilon} \qquad \text{and}\qquad V_*(2) - V_*(3) = -1 - r_* = -1 - \epsilon \frac{M-1}{1+2\epsilon}.$$

Let $\hat{r} \in \argmin_{\tilde{r} \in \Re^\states} \mathcal{L}_\infty(\tilde{r}|d,r,V)$.
Then, by Theorem \ref{thm:value-as-reward}, $\hat{r} - V_*$ is a constant function.  It follows that 
$$\hat{r}(3) - \hat{r}(1) = M - \epsilon \frac{M-1}{1+2\epsilon} \qquad \text{and}\qquad \hat{r}(2) - \hat{r}(3) = -1 - \epsilon \frac{M-1}{1+2\epsilon}.$$
Hence, 
\begin{align*}
\hat{r}(3) - \hat{r}(1) + (1+\epsilon) (\hat{r}(2) - \hat{r}(3))
= M - \epsilon \frac{M-1}{1+2\epsilon} - (1+\epsilon) \left(1 + \epsilon \frac{M-1}{1+2\epsilon}\right) = \frac{1 - \epsilon^2}{1 + 2 \epsilon} M - \frac{1 + \epsilon + \epsilon^2}{1 + 2 \epsilon}.
\end{align*}
The right-hand-side is positive if and only if $M > (1+\epsilon+\epsilon^2) / (1-\epsilon^2)$, in which case $\hat{r}(3) - \hat{r}(1) + (1+\epsilon) (\hat{r}(2) - \hat{r}(3)) > 0$, and thus, $\hat{r}(2) > (\hat{r}(1) + \epsilon \hat{r}(3))/(1+\epsilon)$. Therefore, the second condition in Lemma \ref{lm:severe-misalignment}  is satisfied. Since $V_*(2) > V_*(1)$ we have $\hat{r}(2) > \hat{r}(1)$ and therefore the first condition of Lemma \ref{lm:severe-misalignment} is also satisfied. Thus, $r_{\hat{\pi}}= -1$.  
\end{proof}

\section{Additional figures}
We include an additional figure visualizing the how the feasible region, reward function and value function depend on $\epsilon$ and $M$ (see Figure \ref{fig:linear-program-different-epsilon-and-M}).

\begin{figure}[htbp]
\centering
\includegraphics[width=0.95\textwidth]{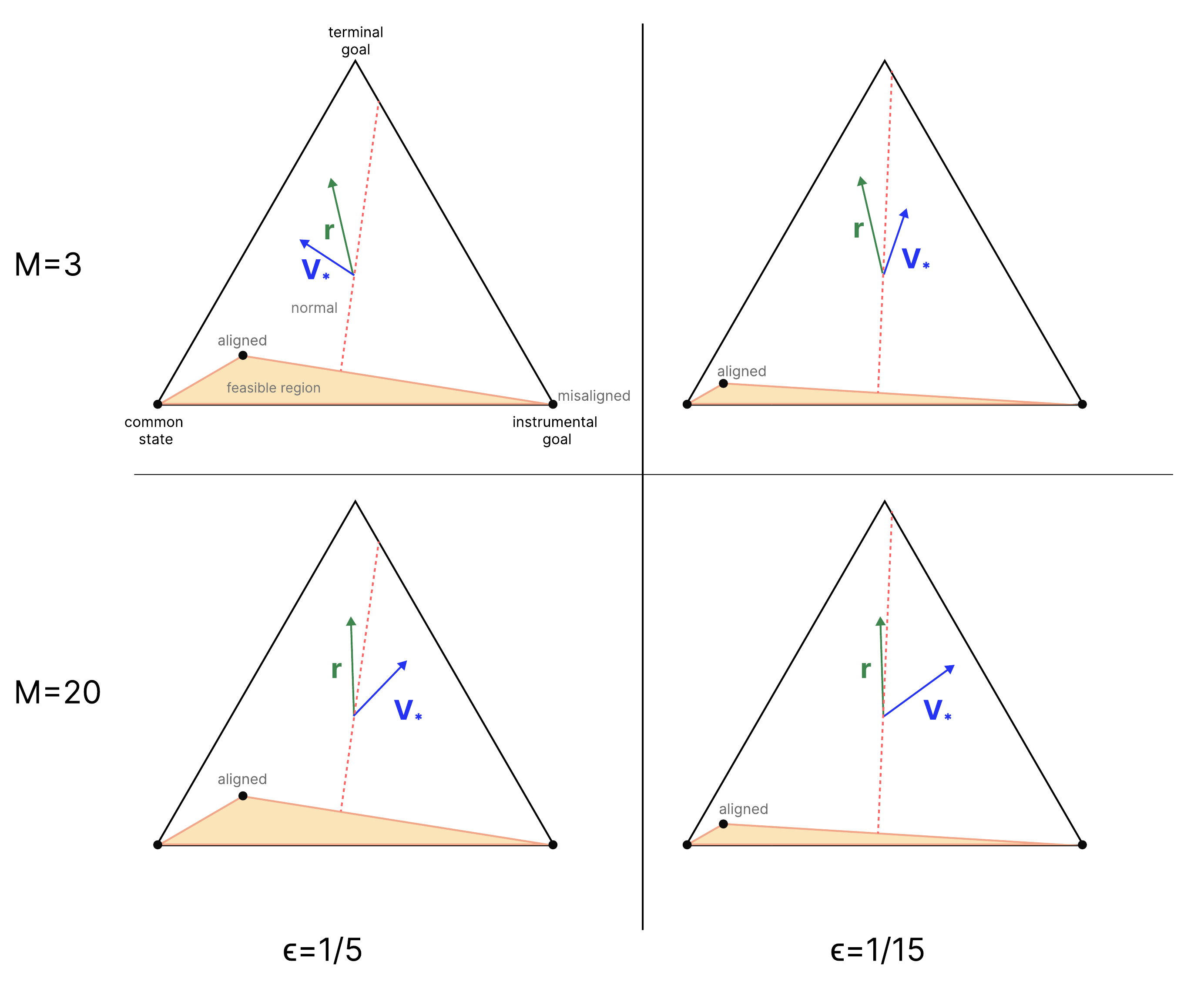}
\captionsetup{width=0.8\textwidth}
\caption{Visualization of feasible region, reward and value for different values of $\epsilon$ and $M$. The feasible region is determined by $\epsilon$: smaller values lead to a smaller region. The reward vector is determined by $M$: larger values lead to a more upright reward vector. The value vector $V_*$ is determined by both $\epsilon$ and $M$. Smaller $\epsilon$ and larger $M$ both lead to $V_*$ pointing more to the right.}
\label{fig:linear-program-different-epsilon-and-M}
\end{figure}

\end{document}